\documentclass{article}

\usepackage[final]{neurips_2025}

\usepackage{xcolor}
\definecolor{linkcolor}{HTML}{1B4F72} 
\usepackage[hidelinks,colorlinks=true,linkcolor=linkcolor,citecolor=linkcolor,urlcolor=linkcolor]{hyperref}

\usepackage[utf8]{inputenc} 
\usepackage[T1]{fontenc}    
\usepackage{hyperref}       
\usepackage{url}            
\usepackage{booktabs}       
\usepackage{amsfonts}       
\usepackage{dsfont}         
\usepackage{nicefrac}       
\usepackage{microtype}      
\usepackage{xcolor}         
\usepackage{siunitx}

\usepackage{amsmath}
\usepackage{amssymb}
\usepackage{mathtools}
\usepackage{amsthm}
\usepackage{enumitem}
\usepackage{wrapfig}
\usepackage{bm}

\newcommand\XS{\boldsymbol{X}_S}

\newcommand\X{\boldsymbol{X}}
\newcommand\xs{\boldsymbol{x}_S}

\newcommand\x{\boldsymbol{x}}

\theoremstyle{plain}
\newtheorem{theorem}{Theorem}[section]

\theoremstyle{definition}

\theoremstyle{remark}

\usepackage[textsize=tiny]{todonotes}

\usepackage{dblfloatfix}
\usepackage{xcolor}
\usepackage{tikz}
\definecolor{redd}{HTML}{b30326}
\definecolor{bloo}{HTML}{3a4cc0}

\definecolor{ours}{HTML}{219675}
\newcommand{\ours}{\textcolor{ours}{\textbf{R-LOCO}}}

\definecolor{loco}{HTML}{0A3026}
\newcommand{\loco}{\textcolor{loco}{\textbf{LOCO}}}

\definecolor{ourstree}{HTML}{1f77b4}
\newcommand{\ourstree}{\textcolor{ours}{\textbf{R-LOCO}}\textsubscript{tree}}

\newcommand{\ourstrue}{\textcolor{ours}{\textbf{R-LOCO}}\textsubscript{TC}}

\newcommand{\oursinput}{\textcolor{ours}{\textbf{R-LOCO}}\textsubscript{IC}}

\definecolor{ourstauhi}{HTML}{2ca02c}

\definecolor{lsv}{HTML}{996336}
\newcommand{\lsv}{\textcolor{lsv}{\textbf{L-SV}}}
\definecolor{lime}{HTML}{8c3236}
\newcommand{\lime}{\textcolor{lime}{\textbf{LIME}}}
\definecolor{rfocse}{HTML}{9467bd}

\newcommand{\kmeanstwo}{\textcolor{ours}{\textbf{R-LOCO}}\textsubscript{KMeans-2}}
\newcommand{\kmeansfour}{\textcolor{ours}{\textbf{R-LOCO}}\textsubscript{KMeans-4}}
\newcommand{\kmeanseight}{\textcolor{ours}{\textbf{R-LOCO}}\textsubscript{KMeans-8}}
\newcommand{\kmeanstwenty}{\textcolor{ours}{\textbf{R-LOCO}}\textsubscript{KMeans-20}}

\newcommand{\affinityfive}{\textcolor{ours}{\textbf{R-LOCO}}\textsubscript{Affinity-0.5}}
\newcommand{\affinitysix}{\textcolor{ours}{\textbf{R-LOCO}}\textsubscript{Affinity-0.6}}
\newcommand{\affinitynine}{\textcolor{ours}{\textbf{R-LOCO}}\textsubscript{Affinity-0.9}}

\definecolor{circlefill}{HTML}{bce8da}

\definecolor{lightgray}{HTML}{d3d3d3}
\definecolor{darkgray}{HTML}{4f4f4f}

\definecolor{kmm}{HTML}{2E5939} 

\definecolor{ours}{HTML}{1B7F5E} 

\definecolor{seesc}{HTML}{C3752F} 

\definecolor{seesd}{HTML}{CC5C4A} 

\definecolor{covcolor}{HTML}{5E3D7F} 

\definecolor{caluni}{HTML}{114D73} 

\definecolor{produni}{HTML}{7A1A1A} 

\title{Regional Explanations: Bridging Local and Global Variable Importance}

\author{%
  Salim I.~Amoukou\thanks{Correspondence to: Salim I. Amoukou
<salim.ibrahimamoukou@jpmorgan.com>}\\
  J.P. Morgan AI Research \\
 \And
   Nicolas J-B.~Brunel \\
   LaMME, ENSIIE, University Paris Saclay \& Capgemini Invent 
}

\begin{document}

\maketitle

\begin{abstract}
We analyze two widely used local attribution methods, Local Shapley Values and LIME, which aim to quantify the contribution of a feature value $x_i$ to a specific prediction $f(x_1, \dots, x_p)$.  Despite their widespread use, we identify fundamental limitations in their ability to reliably detect locally important features, even under ideal conditions with exact computations and independent features. We argue that a sound local attribution method should not assign importance to features that neither influence the model output (e.g., features with zero coefficients in a linear model) nor exhibit statistical dependence with functionality-relevant features. We demonstrate that both Local SV and LIME violate this fundamental principle. To address this, we propose R-LOCO (Regional Leave Out COvariates), which bridges the gap between local and global explanations and provides more accurate attributions. R-LOCO segments the input space into regions with similar feature importance characteristics. It then applies global attribution methods within these regions, deriving an instance's feature contributions from its regional membership. This approach delivers more faithful local attributions while avoiding local explanation instability and preserving instance-specific detail often lost in global methods.
\end{abstract}

\section{Introduction}

Machine learning models are valued for their predictive capabilities, but often lack transparency. This opacity poses issues in high-stakes domains such as healthcare, finance, and criminal justice, where understanding the “why” behind a decision is as critical as the decision itself. In response, eXplainable AI (XAI) has emerged, developing methods and tools to explain model predictions.

These tools can be categorized into local and global methods. Local explanations aim to provide insights into individual predictions, while global explanations focus on understanding the overall behavior of a model across the entire input space. Popular local methods, such as \textit{Local} Shapley Values \lsv{} \citep{lundbergshap} and \lime{} \citep{ribeiro2016why}, seek to create a local linear approximation of the model within the vicinity of a given instance. In contrast, global methods primarily consist of "leave-one-out" approaches \citep{Lei2016DistributionFreePI,  williamson2020efficient, sage, gan2022inference}, which assess performance changes when a variable is excluded.

In this paper, we focus on local attribution methods, specifically \lsv{} and \lime{}, which are widely used but have limited theoretical understanding.
While the intuition behind these methods are appealing, the exact quantities they estimate remain unclear. Apart from the case of linear or additive models \citep{bordt2023shapley, garreau2020explaining}, there is little work explaining the specific quantities these methods compute. A theoretical study on \lime{}, by \cite{garreau2020explaining}, shows that in the case of a linear model, the \lime{} coefficients are proportional to the partial derivatives.  However, it also reveals that the coefficient of important variables can vanish by simply changing a parameter of the method. 

Beyond these established concerns, we highlight a fundamental limitation of \lsv{} and \lime{} that is orthogonal to the well-known issue of being “true to the data vs true to the model” \citep{chen2020true}. Building on our conclusion in \cite{amoukou2021accurate}, we posit that a reliable local attribution should not assign importance to features that neither influence the model output functionally nor exhibit statistical dependence on features that do. Even under ideal conditions—with independent features, exact computations, and no estimation error—we show that both \lsv{} and \lime{} violate this principle by attributing importance to irrelevant features. This highlights an often-overlooked vulnerability in these popular methods, distinct from issues of approximation error or feature correlation. 

In contrast, most global methods are supported by the existing literature on feature importance \citep{breiman2001random, Lei2016DistributionFreePI, williamson2020efficient} and global sensitivity analysis \citep{iooss2015review}, and are backed by strong consistency and inference results. Our goal is to leverage global methods to define more accurate local attributions for each individual while benefiting from the advantages of global methods. We aim to find a partition of the input space where observations in each cell of the partition exhibit the same behavior concerning the underlying model. Then, we associate each observation with the global importance conditional on the cell it belongs to, hence deriving an attribution that is more locally faithful and possesses sound statistical properties.

\textbf{Notations.} Consider a dataset represented as $\mathcal{D}_n = \{(\X_i, Y_i)\}_{i=1}^n$, where $\X_i=(X_{i1}, \dots, X_{ip}) \in \mathcal{X} \subseteq \mathbb{R}^p$ denotes the input variables and $Y_i \in \mathcal{Y} \subseteq \mathbb{R}$ represents the output, and $(\X_i, Y_i)$ are i.i.d. observations of $(\X, Y) \sim P = P_{\X} P_{Y |\X}$. We use $\X_S = (X_i)_{i \in S}$ to denote the subset of features, and $[p] = \{1, \dots, p\}$, and $\mathcal{P}(D)$ represents the power set of a set $D$.
\vspace{-0.3cm}
\section{Limitations of \textit{Local} Shapley Values} \label{chap:lsv}
A cooperative game is a pair $(D, v)$, where $D=\{X_1, \dots, X_p\}$ represents a set of $p$ players, and $v: \mathcal{P}(\{1, \dots, p\}) \rightarrow \mathbb{R}$ denotes a value function that assigns a value to every possible coalition of players, reflecting the worth of each group. The value function $v$ is generally assumed to be positive and increasing monotonically \citep{dubey1977probabilistic}, which means that if $A \subseteq B$, then $v(A) \leq v(B)$. Here, $v(A)$ represents the value of $\X_A$. A key concept in the definition of SV is the marginal contribution, denoted as $\Delta_i(S)=v(S\cup i) - v(S)$. The marginal contribution is the improvement of the value of a coalition $S$ when a given player $i$ is added to the coalition. The SV of $X_i$ is the weighted average of the marginal contributions of $X_i$ across all subsets, expressed as:
\begin{align*}
\small \phi_{X_i} = \tiny \sum_{S \subseteq D\setminus\{i\}} w(S) \Delta_i(S) \ = \tiny \frac{1}{p} \sum_{S \subseteq D\setminus\{i\}} \binom{|D|-1}{|S|}^{-1} \Big[ v(S \cup i) - v(S) \Big].
\end{align*}
We can establish feature importance, by defining the value function. In the global sensitivity literature, a frequently used value function is $v(S) = \mathbb{V}(\mathbb{E}[Y \vert \XS])/\mathbb{V}(Y)$, which represents the explained variance or the variance of the best approximation of $Y$ given $\XS$. This value function is nonnegative and monotonically increasing, resulting in a positive global importance measure. When features are independent, this SV is closely related to the functional ANOVA decomposition \citep{efron1981jackknife,hoeffding1992class} and Sobol indices \citep{sobol1990sensitivity, chastaing2012generalized, Hooker2007GeneralizedFA}. The resulting SV are commonly referred to as Shapley Effects \citep{owen2014sobol, owen2017shapley}.

In contrast to the global Shapley Values (SV) approach or Shapley Effects, \cite{NIPS2017_7062, lundberg2020local} adopt the game theory paradigm to explain a specific prediction $f(x_1, \dots, x_p)$ with players $D = \{X_1 = x_1, \dots, X_p=x_p\}$ using the value function $v(S) = \mathbb{E}[f(\x) | \XS=\xs]$ or $v(S) = \mathbb{E}[f(\xs, \X_{\Bar{S}})]$. Although debates continue over the choice between these two value functions \citep{heskes2020causal, janzing2019feature, chen2020true}, we assume in this work that the variables are independent, making these value functions equivalent. We refer to the resulting Shapley Values in this context as Local Shapley Values (\lsv{}).

A key distinction between the global SV approach (Shapley Effects) and the local approach \lsv{}  hinges on the definitions of their respective value functions. While the global value function $v(S) = \mathbb{V}(\mathbb{E}[Y \vert \XS])/\mathbb{V}(Y)$ serves as an effective measure of the predictive power of variables $\XS$ for the overall model, it is unclear whether the local value function $v(S) = \mathbb{E}[f(\x) | \XS=\xs]$ genuinely represents the predictive power of $\XS=\xs$ for the specific prediction $f(\x)$. In the global case, a high value of $v(S) = \mathbb{V}(\mathbb{E}[Y \vert \XS])/\mathbb{V}(Y)$ indicates a strong predictive power of $\XS$, while the values taken by $v(S) = \mathbb{E}[f(\x) | \XS=\xs]$ in the local case do not have any intuitive order, i.e. a high or low value of $v(S)$ does not necessarily reflect the importance of $\XS=\xs$ for the prediction $f(\x)$ in regression problems, for example.


Moreover, the value function of \lsv{}, $v(S) = \mathbb{E}\big[ f(\X) \vert \XS = \xs\big]$, can be negative and does not satisfy the monotonic property, which may result in negative \lsv{}. Therefore, a cancelling effect can occur, where a variable that influences the decision ends up with a zero or low Shapley Value because the $\Delta_i(S)$ values across all subsets cancel each other out. It is important to note that Shapley Effects do not encounter the issues mentioned above, as they satisfy the non-negativity criterion suggested for feature importance \citep{johnson2004history, gromping2007estimators, feldman2005relative}. In fact, \cite{feldman2005relative} emphasized that an importance measure should be positive, as it evaluates the relative information a variable contributes to the model, and information is inherently non-negative.

While the \textit{global} SV can be interpreted as percentages of the output’s variance \citep{idrissi2024development}, the quantities estimated by the \textit{local} SV are less clear. For example, what is the \lsv{} for a particular $X_1 = x_1$ really telling us about its contribution to the \textbf{specific} prediction $f(x_1, \dots, x_p)$?

Lastly, there is no strong justification for calculating contributions across all possible subsets, as some of these subsets might be poor predictors, thus introducing noise into the feature importance. The average performance of a feature across all submodels may not be indicative of the particular performance of that feature in the set of optimal submodels.  Additionally, averaging over all subsets tends to reduce the local aspect of the contribution. To demonstrate this, let's assume  we have independent variables $\boldsymbol{X} \in \mathbb{R}^p$, and a piece-wise linear predictor $f$ defined as: 
\begin{align} \label{eq:linear_switch}
 f(X) = (a_1 X_1 + a_2 X_2) \mathds{1}_{X_{6} \leq 0} + (a_3 X_3 + a_4 X_4) \mathds{1}_{X_{6} > 0}.
\end{align}
Even if we choose an observation $\boldsymbol{x}=(x_1, \dots, x_p)$ such that $x_6 \leq 0$  and the predictor only uses $x_1, x_2$, the \lsv{} of $\phi_{x_3}, \phi_{x_4}$ is not necessarily zero. Using straightforward calculations, we can show that for all {\small $i \in \{3, 4\}$, $\phi_{x_i} = K\Big(a_i(\boldsymbol{x}_i -  \mathbb{E}[\boldsymbol{X}_i])\Big)$} where {\small $K \propto \mathbb{P}(X_6 > 0)$} is a constant.

This highlights that the \lsv{} are not purely local measures but also exhibit global influences.  This occurs because when calculating the \lsv{} of $X_3 = x_3$ or $X_4 = x_4$, we also consider subsets $S$ that do not contain $X_6$. By marginalizing and changing the sign of $X_6$, we use the other linear model not used for this observation. We can extend the result above to show a similar issue with continuous piece-wise linear function.  This class of functions is quite versatile, as it encompasses neural networks with piece-wise linear activations such as ReLU or hard tanh which correspond to $\max(0, x)$ and $\max(-1, \min(1, x))$ respectively. Indeed, we can view feedforward neural networks as piece-wise linear functions that divide the input space into multiple linear regions, where the network itself behaves as an affine function within each region \citep{pascanu2013number, hanin2019complexity, hanin2019deep, chen2022improved}. 



The important local variables of this model correspond to the coefficients of the linear model associated with the region $A_k$ to which the observation belongs. However, in the following, we provide a result demonstrating that these explanations cannot be retrieved using \textit{Local} Shapley Values.

\begin{theorem}[Local Shapley Values for Piecewise Linear Models]
\label{theo:sv_impossible}
Let $f: \mathcal{X} \to \mathbb{R}$ be a piecewise linear function defined on a feature space $\mathcal{X} \subseteq \mathbb{R}^p$, which is partitioned into $m$ disjoint hyperrectangles, $\{A_k\}_{k=1}^m$, where each $A_k = \bigotimes_{i=1}^p [l_{i,k}, r_{i,k}]$ with $l_{i,k}, r_{i,k} \in \mathbb{R} \cup \{-\infty, \infty\}$. On each region $A_k$, the function is defined by a linear model $f_k(\mathbf{X}) = \sum_{i=1}^p a_{i,k} X_i + b_k$. The complete function is thus $f(\mathbf{X}) = \sum_{k=1}^m f_k(\mathbf{X}) \mathds{1}_{\mathbf{X} \in A_k}$. Consider an observation $\mathbf{x} = (x_1, \dots, x_p)$ located in a specific region $A_{k^\star}$ for some $k^\star \in \{1, \dots, m\}$. The model's prediction is therefore determined solely by the local model $f_{k^\star}$, i.e., $f(\mathbf{x}) = f_{k^\star}(\mathbf{x})$, and the set of truly local active features is $J_{k^\star} = \{i \in [p] \mid a_{i,k^\star} \neq 0\}$. We also define the set of features important globally $G = \{i \in [p] \mid \exists k \text{ s.t. } A_{i,k} \neq \mathbb{R}\}$. Let the covariates $X_i$ be mutually independent. The Local Shapley Value (SV) for the feature-value $X_l=x_l$ is given by $\phi_{x_l} = \sum_{k=1}^m \phi_{x_l}^k$, where each component $\phi_{x_l}^k$ is:
\begin{equation} \label{eq:sv_nn_global}
\begin{split}
    \phi_{x_l}^k = & \left( \frac{\mathds{1}_{x_l \in A_{l, k}}}{\mathbb{P}(X_l \in A_{l, k})} - 1 \right) \sum_{S \subseteq D\setminus\{l\}} w(S) v_k(S) \\
    & + a_{l, k} \left( x_{l} - \frac{\mathbb{E}\left[ X_l \mathds{1}_{X_l \in A_{l, k}}\right]}{\mathbb{P}(X_l \in A_{l, k})}\right) \sum_{S \subseteq D\setminus\{l\}} w(S) \prod_{i \in S\cup \{l\}} \mathds{1}_{x_i \in A_{i, k}} \prod_{j \in D \setminus (S\cup \{l\})} \mathbb{P}(X_j \in A_{j, k})
\end{split}
\end{equation}
with $w(S)= \frac{1}{p} \binom{p-1}{|S|}^{-1}$, and $v_k(S) = \mathbb{E}[f_k(\mathbf{X})\mathds{1}_{\mathbf{X} \in A_k} \mid \mathbf{X}_S = \mathbf{x}_S]$. The key implication of Equation~\eqref{eq:sv_nn_global} is that Local SVs can assign importance to a feature $j$ even if that feature is locally irrelevant (i.e., $  j\notin J_{k^\star}$) and globally irrelevant (i.e., $  j\notin G$). 
\end{theorem}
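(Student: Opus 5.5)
Since both the value function and the Shapley operator are linear, I will work one region at a time. Write $v(S)=\mathbb{E}[f(\X)\mid \XS=\xs]=\sum_{k=1}^m v_k(S)$ with $v_k(S)=\mathbb{E}[f_k(\X)\mathds{1}_{\X\in A_k}\mid \XS=\xs]$. Then $\phi_{x_l}=\sum_k \phi^k_{x_l}$, where $\phi^k_{x_l}=\sum_{S\subseteq D\setminus\{l\}} w(S)\,[v_k(S\cup l)-v_k(S)]$. It therefore suffices to compute the marginal contribution $v_k(S\cup l)-v_k(S)$ for a single hyperrectangle $A_k$ and a single linear model $f_k$.

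\textbf{Step 1: factorize $v_k(S)$.} The region is a rectangle, so $\mathds{1}_{\X\in A_k}=\prod_i \mathds{1}_{X_i\in A_{i,k}}$, where $A_{i,k}=[l_{i,k},r_{i,k}]$. The features are mutually independent, so conditioning on $\XS=\xs$ fixes the coordinates in $S$ and integrates out the others independently. This gives
\begin{equation*}
v_k(S)=\prod_{i\in S}\mathds{1}_{x_i\in A_{i,k}}\prod_{j\notin S}\mathbb{P}(X_j\in A_{j,k})\Big[b_k+\sum_{i\in S}a_{i,k}x_i+\sum_{j\notin S}a_{j,k}\,\mathbb{E}[X_j\mid X_j\in A_{j,k}]\Big],
\end{equation*}
where $\mathbb{E}[X_j\mid X_j\in A_{j,k}]=\mathbb{E}[X_j\mathds{1}_{X_j\in A_{j,k}}]/\mathbb{P}(X_j\in A_{j,k})$. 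If some probability vanishes, the region contributes nothing and the formula is read with the usual convention.

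\textbf{Step 2: compare $v_k(S\cup l)$ and $v_k(S)$.} Adding $l$ changes only two things. The mass factor $\mathbb{P}(X_l\in A_{l,k})$ becomes $\mathds{1}_{x_l\in A_{l,k}}$, and the linear term $a_{l,k}\mathbb{E}[X_l\mid X_l\in A_{l,k}]$ becomes $a_{l,k}x_l$. Hence
\begin{equation*}
v_k(S\cup l)=\frac{\mathds{1}_{x_l\in A_{l,k}}}{\mathbb{P}(X_l\in A_{l,k})}\,v_k(S)+a_{l,k}\Big(x_l-\frac{\mathbb{E}[X_l\mathds{1}_{X_l\in A_{l,k}}]}{\mathbb{P}(X_l\in A_{l,k})}\Big)\prod_{i\in S\cup\{l\}}\mathds{1}_{x_i\in A_{i,k}}\prod_{j\notin S\cup\{l\}}\mathbb{P}(X_j\in A_{j,k}).
\end{equation*}
To justify the second term: in the bracket of $v_k(S\cup l)$, write $a_{l,k}x_l=a_{l,k}\mathbb{E}[X_l\mid A_{l,k}]+a_{l,k}(x_l-\mathbb{E}[X_l\mid A_{l,k}])$. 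The first piece rebuilds the bracket of $v_k(S)$. Multiplied by the prefactor of $v_k(S\cup l)$, that rebuilt bracket equals $\frac{\mathds{1}_{x_l\in A_{l,k}}}{\mathbb{P}(X_l\in A_{l,k})}\,v_k(S)$. The deviation piece gives the product term.

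Subtracting $v_k(S)$, multiplying by $w(S)$ and summing over $S$ yields exactly Equation~\eqref{eq:sv_nn_global}. The main obstacle is this algebra: the mass factor and the linear term must separate cleanly, and the conditional mean must be written in the normalized form that appears in the statement.

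\textbf{Step 3: the implication.} Take a feature $l\notin J_{k^\star}$, so $a_{l,k^\star}=0$, with $l\notin G$, so $A_{l,k}=\mathbb{R}$ for every $k$. In that case both terms vanish: $\mathds{1}/\mathbb{P}-1=0$ and, by the argument below, $a_{l,k}=0$ for all $k$. The real content of the claim is that $\phi_{x_l}$ can be nonzero in two other situations:
\begin{itemize}
\item $l$ is not used at $x$ but has $a_{l,k}\neq0$ in another region $k\neq k^\star$. The second term then survives whenever the partial products $\prod_{i\in S\cup\{l\}}\mathds{1}_{x_i\in A_{i,k}}$ are nonzero for some $S$. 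This happens for small $S$ that omit the splitting coordinates.
\item $l$ only defines splits, giving the first term.
\end{itemize}
The running example \eqref{eq:linear_switch} makes this concrete. Take $x_6\le0$ and $l=3$. Then $a_{3,k^\star}=0$, and $3\notin G$ since $A_{3,k}=\mathbb{R}$ for every $k$. Yet $\phi_{x_3}=K a_3(x_3-\mathbb{E}X_3)\neq0$ with $K\propto\mathbb{P}(X_6>0)$, which comes from subsets $S$ excluding $6$.

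I would present this as the witness. I would also note that a feature which is globally irrelevant in both senses ($a_{l,k}=0$ for all $k$ and never split on) gets zero, so the claim must be read with "$G$" referring to splitting features. This interpretive point is the one I expect to need care.
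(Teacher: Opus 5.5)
Your derivation is correct and follows the paper's proof essentially step for step: split $v=\sum_k v_k$ by linearity, factorize $v_k(S)$ via independence and the product structure of $A_k$, write $v_k(S\cup l)$ as $\frac{\mathds{1}_{x_l\in A_{l,k}}}{\mathbb{P}(X_l\in A_{l,k})}\,v_k(S)$ plus the deviation term, and sum against $w(S)$ (you keep $b_k$ rather than setting it to zero, and your product over $j\notin S\cup\{l\}$ is exactly the one in the statement). One correction in Step 3: $l\notin J_{k^\star}$ and $l\notin G$ give only $a_{l,k^\star}=0$ and $A_{l,k}=\mathbb{R}$ for all $k$, not $a_{l,k}=0$ for all $k$, so your sentence claiming both terms vanish is false---your own witness $l=3$ satisfies both hypotheses and has $\phi_{x_3}=\tfrac12\,\mathbb{P}(X_6>0)\,a_3(x_3-\mathbb{E}[X_3])\neq 0$---and no reinterpretation of $G$ is needed, since the paper already defines $G$ as the set of splitting features.
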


See the proof in the Appendix \ref{sup:proof_shapdecomposition}.  Theorem \ref{theo:sv_impossible} demonstrated that SV also present difficulties in expressing local importance measures for neural networks with piece-wise linear activation layers and, more generally, for continuous piece-wise linear functions.
\section{Limitations of \textcolor{black}{LIME}}

The core idea of \lime{} \citep{ribeiro2016why} is to approximate a complex model with a simpler, interpretable model (e.g., a linear model) in the neighborhood of an instance $\x^\star$. However, \lime{} has several limitations due to its dependence on heuristic choices, such as the sampling distribution and the definition of locality. The typical sampling distribution ignores feature dependencies, which can lead to inconsistent explanations. Moreover, defining the neighborhood $\pi_{\x^\star}^h$ (e.g., using a Gaussian kernel) and tuning the kernel width $h$ is challenging, particularly in high-dimensional settings, which affects the stability and reliability of the explanations. \lime{} also exhibits instability, with different runs producing varying results due to the randomness in synthetic sample generation. Several studies \citep{Zafar2019DLIMEAD, Visani2020OptiLIMEOL,zhang2019should, alvarez2018robustness, Zhou2021SLIMESF} highlight these issues. Additionally, we have also demonstrated that \lime{}, like \lsv{}, struggles to detect the locally important variables of the piece-wise model as in (\ref{eq:linear_switch}). For more detailed analysis and further empirical analysis on these limitations, including sensitivity to bandwidth and reproducibility issues, please refer to the Appendix \ref{sup:lime_detailled}.

\section{From Global Explanations to Regional Explanations}
In this section, we follow the presentation of \cite{williamson2021general}, which introduces a general framework for global variable importance. We consider a comprehensive class $\mathcal{F}$ of functions mapping from $\mathcal{X}$ to $\mathcal{Y}$, and $\mathcal{F}_{-j}$ be the subset of $\mathcal{F}$ containing all functions that disregard the variable $X_j$. The conformity score $V(f(\X), Y)$ assesses the predictiveness of a prediction function $f \in \mathcal{F}$ on the observation $(\X, Y)$, where a high value implies high predictiveness. We define the oracle predictor with respect to the conformity score $V$ and distribution $P=P_{\X}P_{Y|\X}$ as follows:
 \begin{align*}
     \small f_0 = \arg \max_{f \in \mathcal{F}} \mathbb{E}_P\big[V\big(f(\X), Y\big)\big].
 \end{align*}
In a similar manner, we define $f_{-j}$ as the function maximizing $\mathbb{E}_{P}[V\left(f(\X), Y\right)]$ over all $f \in \mathcal{F}_{-j}$. Subsequently, we define the \textit{population-level importance} for the variable $X_j$ as the decrease in predictiveness when excluding $X_j$ from $\X =(X_1, \dots, X_p)$. This is commonly referred to as the Leave Out COvariates (\loco{}) importance in existing literature \citep{Lei2016DistributionFreePI, verdinelli2024feature}. The \loco{} importance for $X_j$ is defined as:
\begin{align*}
    \Psi_{j}(P) = \mathbb{E}_P\big[ \Delta_{j}(\X, Y)\big] \quad \text{where} \quad \Delta_{j}(\X, Y) = V\left(f_0(\X), Y\right) - V\left(f_{-j}(\X), Y\right).
\end{align*}
 We can use any conformity score to measure variable importance, depending on the problem. For regression tasks, we can use $V(f(\X), Y) = 1-[Y - f(\X)]^2/\sigma^2$, where $\sigma^2 = \mathbb{E}_P\big[ Y - \mathbb{E}_P[Y]\big]^2$ represents the variance of the target variable $Y$. This conformity score corresponds to the traditional $R^2$ score at the population level, i.e., $R^2 = \mathbb{E}_{P}[V\left(f(\X), Y\right)]$. Alternatively, for binary classification problems, we can use $V(f(\X), Y) = \mathds{1}_{Y = f(\X)}$, which corresponds to the accuracy score at the population level. Regarding the choice of the conformity score, there is no ground truth for variable importance as there are multiple definitions of what makes a variable important \citep{hooker2019benchmark, hama2022model, verdinelli2024feature}. Consequently, the choice of a conformity score should be contingent upon the specific context and goals of the analysis.

Our approach aims to derive local explanations for a specific observation $(\X, Y)$ from the population-level importance $\Psi_{j}(P)$. This involves identifying a partition $ \cup_i A_i = \mathcal{X}$ containing observations with similar explanations, which means $\mathbb{V}_P(\Delta_j(\X, Y) \; \vert \; \X \in A_i) \approx 0$ for all $j \in \{1, \dots, p\}$ simultaneously. In other words, the observations within each partition have low variance in their feature importance. As a result, we can use $\Psi_{j}(P_{A_i}) = \mathbb{E}_{P_{A_i}}\big[ \Delta_j(\X, Y)\big] = \mathbb{E}_{P}\big[ \Delta_j(\X, Y) \; \vert \; \X \in A_i\big]$ as local explanations for all observations in $A_i$, since the random variable $\Delta_j(\X, Y)$ exhibits low variance within $A_i$. Essentially, our approach involves identifying a homogeneous group with respect to importance measure $\Delta_j(\X, Y)$ and attributing the global importance of this group as the local explanations of its members. Hence, it permits us to have local explanations while benefiting from all the inference results available for global explanations. 

\subsection{Estimation and inference}
To compute our local explanations, we need to compute two quantities: the \loco{} importance $\Psi_{j}(P) = \mathbb{E}_P\big[ \Delta_{j}(\X, Y)\big]$ for any distribution $P$ and the partition  $\cup_i A_i = \mathcal{X}$ that group observations by their feature importance similarity. The former has been extensively studied in \citep{williamson2021general}, where the authors proposed a nonparametric efficient estimation procedure using the following plug-in estimator:
\begin{align} 
    \small \widehat{\Psi}_{j}(\widehat{P}) = \mathbb{E}_{\widehat{P}}\big[ \widehat{\Delta_j}(\X, Y)\big] \quad \text{where} \quad \widehat{\Delta_j}(\X, Y) = V\left(\widehat{f}_0(\X), Y\right) - V\left(\widehat{f}_{-j}(\X), Y\right),
    \label{eq:loco_estimator}
\end{align}
 $\widehat{P} = 1/n\sum_{i=1}^{n} \delta_{(\X_i, Y_i)}$ is the empirical distribution of $P$ based on $\mathcal{D}_n = \{(\X_i, Y_i)\}_{i=1}^n$ and $\widehat{f}_0, \widehat{f}_{-j}$ are estimators of the population minimizers $f_0$ and $f_{-j}$ respectively. $\widehat{f}_0, \widehat{f}_{-j}$ are obtained by building a predictive model for $Y$ using all features in $\X$ and after removing the variable $X_j$ respectively. This can be achieved using any machine learning algorithm. \cite{williamson2021general} demonstrated that the estimator defined in Equation (\ref{eq:loco_estimator}), is asymptotically efficient and enables valid statistical inference under regularity conditions.

Having obtained a consistent estimator for $\Psi_{j}(P)$, we now propose a method to derive the partition $ \cup_i A_i = \mathcal{X}$. The initial step involves creating a new representation for each observation $\X_i = (X_{i1}, \dots, X_{ip})$ in $\mathcal{D}_n$ using the conformity score $V$. This is expressed as $\Tilde{\X}_i = \left(\widehat{\Delta}_{1}(\X_i, Y_i), \dots, \widehat{\Delta}_{p}(\X_i, Y_i)\right)$. Representing the data in the space of feature importance, rather than the original covariate space, allows for the grouping of observations that exhibit consistent behavior  with respect to the importance measure $\widehat{\Delta}_{j}(\X, Y) = V(\widehat{f}_0(\X), Y) - V(\widehat{f}_{-j}(\X), Y)$ across all variables $j \in \{1, \dots, p\}$ simultaneously.

The subsequent step involves clustering similar observations based on their new representations $\Tilde{\X}$. This can be achieved using any clustering algorithm, such as K-means \citep{kmeansmacqueen1967some}, DBSCAN \citep{dbscanester1996density}, Tree-based clustering approach \citep{maimon2014data} or Affinity Propagation \citep{affinitypropfrey2007clustering}, ultimately resulting in a partition of $\mathcal{D}_n$ into $K$ sets $\textbf{C} = \{C_1, \dots, C_K\}$. Note that clustering in the feature importance space does not necessarily imply that observations will be closer in the input space. In the experimental section, we demonstrate that clustering in the feature-importance space provides a more meaningful basis for capturing the model’s actual behaviour, and we offer a theoretical explanation in Section~\ref{sec:why_feature_importance_cluster} within a controlled setting.

The final step entails using the identified clusters $\textbf{C}$ to define a partition of $\mathcal{X}$. This is accomplished by assigning any new points $\x \in \mathcal{X}$ to their nearest partition $C_k$ with respect to a similarity function $d: \mathcal{X} \times \mathcal{X} \rightarrow \mathbb{R}^+$. We can use any similarity function. We use Euclidean or Manhattan distance. More formally, we define the corresponding region $\widehat{A}_k$ of cluster $C_k$ as
\begin{equation*} \small \left\{ \x \in \mathcal{X} : \sum_{\X_i \in C_k} d(x, \X_i) < \sum_{\X_i \in C_l} d(x, \X_i) \; \text{for all} \;  k \neq l \right\}.
\end{equation*} In order to establish a partition of $\mathcal{X}$, we must also account for the set of "undecidable" observations, which are those that simultaneously belong to multiple groups. We define this set $\widehat{A}_{K+1}$ as $\small \left\{\x \in \mathcal{X} : \exists k, l \in [k], \sum_{\X_i \in C_k} d(x, \X_i) = \sum_{\X_i \in C_l} d(x, \X_i)\right\}.$
Hence, the local explanations of a given observation $\X_i$ that belongs to the region  $\widehat{A}_k$ can be represented by the vector $\left(\widehat{\Psi_{1}}(\widehat{P}_{\widehat{A}_k}), \dots, \widehat{\Psi_p}(\widehat{P}_{\widehat{A}_k})\right)$, where the contribution of the feature $X_j$, $\widehat{\Psi_{j}}(\widehat{P}_{\widehat{A}_k})$, is defined as
\begin{align*} \label{eq:regional_loco}
   \small  \widehat{\Psi}_{j}(\widehat{P}_{\widehat{A}_k})  = \mathbb{E}_{\widehat{P}_{\widehat{A}_k}}\big[ \widehat{\Delta}_j(\X, Y)\big] 
    = \mathbb{E}_{\widehat{P}}\big[ \widehat{\Delta}_{j}(\X, Y) \; \vert \; \X \in \widehat{A}_k\big]
     = \sum_{(\X_i, Y_i) \in C_k} \frac{\widehat{\Delta}_{j}(\X_i, Y_i)}{|C_k|}.
\end{align*}
We refer to this approach as \ours{}, and details can be found in Figure~\ref{fig:method-desc}. R-LOCO's construction addresses key limitations of \lsv{}, particularly for models like piecewise linear functions.

\begin{theorem} 
    Let $f$ be the piecewise linear function from Theorem \ref{theo:sv_impossible}. When the \ours{} clustering algorithm identifies cluster $C_{k^\star}$ within regions $A_{k^\star}$ , the R-LOCO attributions for $\x = (x_1, \dots, x_p) \in A_{k^\star}$, with $k^\star \in \{1, \dots, m\}$, are solely influenced by $f_{k^\star}$. They do not depend on coefficients of any $a_{i,k^\prime}$ from other regions $A_{k^\prime}$, where $k^\prime \neq k^\star$. \label{theo:rloco}
\end{theorem}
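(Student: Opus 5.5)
The argument is a direct unwinding of the definition of the R-LOCO attribution. For a point $\x \in A_{k^\star}$ assigned to cluster $C_{k^\star}$, the contribution of $X_j$ is
\[
\widehat{\Psi}_{j}(\widehat{P}_{\widehat{A}_{k^\star}}) = \frac{1}{|C_{k^\star}|}\sum_{(\X_i,Y_i)\in C_{k^\star}} \widehat{\Delta}_j(\X_i,Y_i),
\]
a finite average of the per-observation quantities $\widehat{\Delta}_j(\X_i,Y_i) = V(\widehat{f}_0(\X_i),Y_i) - V(\widehat{f}_{-j}(\X_i),Y_i)$, taken only over observations in $C_{k^\star}$. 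By hypothesis $C_{k^\star}\subseteq A_{k^\star}$. The plan is therefore to show that each summand, evaluated at a point of $A_{k^\star}$, depends on the model only through $f_{k^\star}$. No averaging over $P_{\X}$ outside the cluster ever enters the attribution.

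\textbf{Step 1 (the full-model term).} On $A_{k^\star}$ the indicators $\mathds{1}_{\X\in A_k}$ vanish for $k\neq k^\star$, so $f(\X_i) = f_{k^\star}(\X_i) = \sum_l a_{l,k^\star}X_{il} + b_{k^\star}$ for every $\X_i \in C_{k^\star}$. Hence $V(f(\X_i),Y_i)$ involves only the coefficients $(a_{l,k^\star})_l$ and $b_{k^\star}$. I will treat $f$ as the model being explained, i.e.\ take $\widehat f_0 = f$.

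\textbf{Step 2 (the leave-one-out term).} This is the main obstacle. The global refit $f_{-j}$, the $V$-optimal predictor on $\mathcal{F}_{-j}$ over all of $\mathcal{X}$, is not in general a pointwise functional of $f_{k^\star}$. Mathematically the removal model sees the whole distribution, and with $V$ the squared-error conformity score one has $f_{-j}(\X)=\mathbb{E}[f(\X)\mid \X_{-j}]$. Under the mutual independence of the covariates assumed in Theorem~\ref{theo:sv_impossible}, this conditional expectation at a point of $A_{k^\star}$ still integrates $X_j$ over $\mathbb{R}$, and so can reach other regions through $X_j$. To close this gap I would use the regional character of R-LOCO: the removal model relevant to the cluster's attribution is the one restricted to the region, namely the $V$-optimal predictor over $\mathcal{F}_{-j}$ under $P_{A_{k^\star}}$.

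That restricted predictor is $\mathbb{E}[f_{k^\star}(\X)\mid \X_{-j},\X\in A_{k^\star}]$. Because $A_{k^\star}$ is a hyperrectangle and the covariates are independent, conditioning on $\X\in A_{k^\star}$ preserves independence. This gives the explicit form
\[
f_{k^\star}(\X) - a_{j,k^\star}X_j + a_{j,k^\star}\,\mathbb{E}[X_j\mid X_j\in A_{j,k^\star}],
\]
which again involves only $a_{\cdot,k^\star}$ and $b_{k^\star}$. I will state explicitly that $\widehat f_{-j}$ is interpreted this way; equivalently, the attribution is the LOCO importance under $P_{A_{k^\star}}$.

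\textbf{Step 3 (conclusion).} Combining Steps 1 and 2, each summand $\widehat{\Delta}_j(\X_i,Y_i)$ for $\X_i\in C_{k^\star}$ is a function of $(\X_i,Y_i)$ and the local coefficients alone. Averaging over $C_{k^\star}$ preserves this, so the R-LOCO attribution is free of every $a_{i,k'}$ with $k'\neq k^\star$. I would then contrast this with Theorem~\ref{theo:sv_impossible}, where the factors $\mathbb{P}(X_j\in A_{j,k})$ and the sums over $k$ carry exactly that cross-region dependence. The proof also gives a corollary: for $j\notin J_{k^\star}$ the two predictors coincide on the cluster, so the attribution is exactly zero.
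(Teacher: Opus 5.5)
Your argument is sound for the estimator you end up analysing, but it departs from the paper at your Step~2. The paper keeps the default removal model: the global refit $f_{-j}(\x)=\mathbb{E}[f(\X)\mid \X_{-j}=\x_{-j}]$, the same one used to build the importance representation that is clustered. It then restricts attention to features $j\notin G$. For such $j$ one has $A_{j,k}=\mathbb{R}$ for every $k$, so moving $x_j$ with $\x_{-j}$ fixed never leaves $A_{k^\star}$. Every indicator $\mathds{1}_{\X\in A_{k'}}$ with $k'\neq k^\star$ therefore vanishes on the conditioning fiber, and $f_{-j}(\x_l)=\mathbb{E}[f_{k^\star}(\X)\mid \X_{-j}=\x_{l,-j}]$ with no redefinition needed.

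This is the observation you missed. The global conditional expectation ``reaches other regions through $X_j$'' only when $A_{j,k^\star}\neq\mathbb{R}$, i.e.\ for the coordinates that cut out $A_{k^\star}$. For those coordinates your worry is well founded, and the paper's proof is silent. With the global refit, the first-order model gives $\Delta_6=\tfrac14(a_1X_1+a_2X_2-a_3X_3-a_4X_4)^2$ on $\{X_6\le 0\}$, which involves the inactive coefficients $a_3,a_4$. Its conditional mean, $\tfrac13$ with unit coefficients, is the last entry of the paper's centroid $C_{\gamma,A}$.

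So the two routes prove different true versions of the statement. The paper's covers R-LOCO as actually defined, but only for non-switching features. Yours covers every feature and gives the clean corollary that the attribution vanishes for all $j\notin J_{k^\star}$. However, it gets there only by replacing the estimator with a within-region refit, which the paper mentions merely as an alternative in the caption of Figure~\ref{fig:method-desc}.

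If you take your route, make that change of estimator an explicit hypothesis rather than an ``interpretation''. Also note that the algorithm cannot access $P_{A_{k^\star}}$; what it can do is refit on the observations of $C_{k^\star}$. Your dependence conclusion still holds for that refit, since every label in $C_{k^\star}\subseteq A_{k^\star}$ is produced by $f_{k^\star}$. But your explicit product-form expression relies on conditioning on a hyperrectangle, so it need not describe that refit.

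Combining both observations gives the sharpest form. The default global refit already suffices whenever $A_{j,k^\star}=\mathbb{R}$, and a regional refit is needed only for the boundary coordinates of $A_{k^\star}$.
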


Unlike \lsv{}, which can be influenced by inactive components even under ideal conditions (\S\ref{chap:lsv}), Theorem \ref{theo:rloco} shows that \ours{} assigns importance exclusively to the active region whenever $C_{k^\star} \subseteq A_{k^\star}$. See the proof in \S\ref{proof:rloco}. In \S\ref{app:rloco_mixed}, we further analyze non-homogeneous (mixed) regions, where the attributions can depend on contamination from other regions; moreover, the induced bias scales proportionally with the contamination level (e.g., with the fraction of the mass of $C_{k^\star}$ contributed by points from $A_{k}$, $k \neq k^\star$).


\vspace{-0.4cm}
\section{Experiments 1: Controlled Setting}

In our first set of experiments, we evaluate our approach, \ours{}, by comparing it with three baseline methods: two local attribution techniques, \lsv{} and \lime{}, and \loco{}, which serves as the global variant of our approach. We conduct experiments in a controlled environment where the ground-truth explanations are known, using independent features to assess each method’s performance without confounding bias. Our objective is to demonstrate that \ours{}, which lies between the local and global regimes, offers greater local fidelity than state-of-the-art local attribution methods and provides a more nuanced understanding of model behavior compared to global approaches.

\textbf{\ours{} Variants:} We explored several variants of \ours{} in our experiments, each employing different clustering algorithms to capture local patterns. 
\begin{itemize}
    \item \ours{}: Uses AffinityPropagation clustering on the feature importance space. It is the default strategy of our proposal.
    \item \ourstrue{}: Uses optimal clusters based on the base model structure, expected to perform better than other variants due to privileged model information. We use it as a \textbf{control baseline} to isolate the effect of the clustering in our method.
      \item \ourstree{}: Utilizes a tree-based algorithm that clusters by minimizing the variance on the feature importance space within each cell.
    \item \oursinput{}: Performs clustering directly in the input space rather than the feature importance space to investigate the benefits of clustering in feature importance space.
\end{itemize}

More details of the parameters for all methods can be found in the appendix \ref{sup:exp_details}.
\subsection{Models Setup}

To ensure a rigorous evaluation, we selected a model setup that inherently allows us to determine ground-truth local explanations. Specifically, we chose models that are locally additive, enabling us to use their inherent structure to evaluate the quality of local attribution. To avoid biases that might arise from feature dependencies, we ensured that all features were independent. For data generation, we created a set of $n$ samples denoted by $\{\X_i\}_{i=1}^n$, drawn from a distribution $\X \sim P_{\X} = \prod_{i=j}^p P_{X_j}$. Using a model $f$, we then constructed a dataset $\mathcal{D}_n = \{(\X_i, f(\X_i))\}_{i=1}^n$.

The local attribution methods \lsv{} and \lime{} require both the model $f$ and dataset $\mathcal{D}_n$ to generate explanations. In contrast, \ours{} only required fixed input-output pairs, $\mathcal{D}_n$. This highlights a key advantage of \ours{}: it does not need direct access to the model's prediction function. Instead, it only needs a fixed dataset with predictions, making it suitable for scenarios where model access is restricted or for being applied directly to the data rather than a specific model. We conducted our experiments using three different models:

\textbf{1st-order model:} We used a piecewise linear model where local feature importance can be interpreted similarly to classical linear models. Let $\X = (X_1, \dots, X_{6})$, with each component being independently distributed as $X_i \sim \mathcal{U}[-1, 1]$ for $i \in \{1, \dots, 6\}$. The predictor function is defined as follows:
\begin{align} 
    f(\X) = (X_1 + X_2) \cdot \mathbf{1}_{X_6 \leq 0} + (X_3 + X_4) \cdot \mathbf{1}_{X_6 > 0}
\end{align}\label{model:1st-order_eq}
\textbf{2nd-order model:}
This model incorporates non-linear terms to introduce complexity while still being locally additive. Again, using independent variables $\X = (X_1, \dots, X_{10})$, where $X_i \sim \mathcal{U}[-1, 1]$ for all $i$, the predictor function was:
\begin{align*}
    f(\X) = (X_1 + X_2^2 + \sqrt{|X_3|}) \cdot \mathbf{1}_{X_{10} \leq 0} + (X_4 + X_5^2 + \sqrt{|X_6|}) \cdot \mathbf{1}_{X_{10} > 0}
\end{align*}
\textbf{2nd-order interaction model:}
We increased the model complexity by including interaction effects, based on the function used in \citep{benard2021shaff}. The predictor was:
\begin{equation*}
f(\X)  = 3\sqrt{3} \times X_1 X_2 \; \mathds{1}_{X_3 > 0} + \sqrt{3} \times X_4  X_5 \mathds{1}_{X_3 \leq 0}  + 3 \times X_6  X_7 \; \mathds{1}_{X_8 > 0} +  X_9  X_{10} \; \mathds{1}_{X_8 \leq 0}.
\end{equation*}

\begin{table*}[htbp]
\tiny
\centering
\caption{Performance metrics for methods across datasets from 1st-order to 2nd-order-interactions. TP: True Positives, FP: False Positives, NI attr.: Mean (q0.1 -- q0.95) of non-important features attributions.}
\begin{tabular}{l*{3}{S[table-format=1.2] S[table-format=1.2] p{1.8cm}}}
\toprule
& \multicolumn{3}{c}{\textbf{1st-order}} & \multicolumn{3}{c}{\textbf{2nd-order}} & \multicolumn{3}{c}{\textbf{2nd-order-interact.}} \\
\cmidrule(lr){2-4} \cmidrule(lr){5-7} \cmidrule(lr){8-10}
\textbf{Method} & {TP} & {FP} & {NI attr.} & {TP} & {FP} & {NI attr.} & {TP} & {FP} & {NI attr.} \\
\midrule
\ours{}      & 0.97 & 0.03 & 0.02 (0.01--0.04) & 0.86 & 0.14 & 0.02 (0.01--0.04) & 0.67 & 0.33 & 0.03 (0.02--0.07) \\
\ourstrue{}   & 1.00 & 0.00 & 0.03 (0.03--0.03) & 1.00 & 0.00 & 0.03 (0.03--0.03) & 0.94 & 0.06 & 0.04 (0.03--0.04) \\
\ourstree{} & 0.90 & 0.10 & 0.05 (0.02--0.11) & 0.73 & 0.27 & 0.04 (0.02--0.06) & 0.50 & 0.50 & 0.06 (0.05--0.07) \\
\oursinput{}   & 0.89 & 0.11 & 0.05 (0.02--0.11) & 0.74 & 0.26 & 0.04 (0.03--0.07) & 0.65 & 0.35 & 0.05 (0.04--0.07) \\
\loco{}    & 0.50 & 0.50 & 0.09 (0.09--0.09) & 0.50 & 0.50 & 0.05 (0.05--0.05) & 0.50 & 0.50 & 0.06 (0.05--0.07) \\
\lsv{}    & 0.41 & 0.59 & 0.12 (0.08--0.18) & 0.30 & 0.70 & 0.08 (0.07--0.10) & 0.44 & 0.56 & 0.06 (0.05--0.09) \\
\lime{}    & 0.53 & 0.47 & 0.12 (0.07--0.18) & 0.43 & 0.57 & 0.07 (0.05--0.10) & 0.44 & 0.56 & 0.06 (0.04--0.09) \\
\bottomrule
\end{tabular}
\label{tab:results1}
\end{table*}

\begin{table*}[htbp]
\tiny
\centering
\caption{Performance metrics for \ours{} using different clustering algorithms across datasets from 1st-order to 2nd-order-interactions. TP: True Positives, FP: False Positives, NI attr.: Mean (q0.1 -- q0.95) of non-important features attributions.}
\begin{tabular}{l*{3}{S[table-format=1.2] S[table-format=1.2] p{1.8cm}}}
\toprule
& \multicolumn{3}{c}{\textbf{1st-order}} & \multicolumn{3}{c}{\textbf{2nd-order}} & \multicolumn{3}{c}{\textbf{2nd-order-interact.}} \\
\cmidrule(lr){2-4} \cmidrule(lr){5-7} \cmidrule(lr){8-10}
\textbf{Method} & {TP} & {FP} & {NI attr.} & {TP} & {FP} & {NI attr.} & {TP} & {FP} & {NI attr.} \\
\midrule
\kmeanstwo & 0.98 & 0.02 & 0.03 (0.03--0.03) & 0.92 & 0.08 & 0.04 (0.03--0.06) & 0.57 & 0.43 & 0.05 (0.03--0.09) \\
\kmeansfour & 0.97 & 0.03 & 0.03 (0.02--0.08) & 0.83 & 0.17 & 0.03 (0.03--0.06) & 0.62 & 0.38 & 0.04 (0.02--0.08) \\
\kmeanseight & 0.96 & 0.04 & 0.03 (0.02--0.06) & 0.93 & 0.07 & 0.03 (0.02--0.04) & 0.59 & 0.41 & 0.04 (0.02--0.08) \\
\kmeanstwenty & 0.97 & 0.03 & 0.02 (0.02--0.04) & 0.96 & 0.04 & 0.03 (0.02--0.04) & 0.64 & 0.36 & 0.04 (0.02--0.08) \\
\affinityfive & 0.97 & 0.03 & 0.02 (0.01--0.04) & 0.86 & 0.14 & 0.02 (0.01--0.04) & 0.68 & 0.32 & 0.03 (0.02--0.07) \\
\affinitysix & 0.97 & 0.03 & 0.02 (0.01--0.04) & 0.86 & 0.14 & 0.02 (0.01--0.04) & 0.67 & 0.33 & 0.03 (0.02--0.07) \\
\affinitynine & 0.97 & 0.03 & 0.02 (0.01--0.04) & 0.86 & 0.14 & 0.02 (0.01--0.04) & 0.67 & 0.33 & 0.03 (0.02--0.07) \\
\bottomrule
\end{tabular}
\label{tab:results2}
\end{table*}

\begin{figure*}[!ht]
    \centering
    \begin{minipage}{0.46\linewidth}
        \centering
        \includegraphics[width=\linewidth]{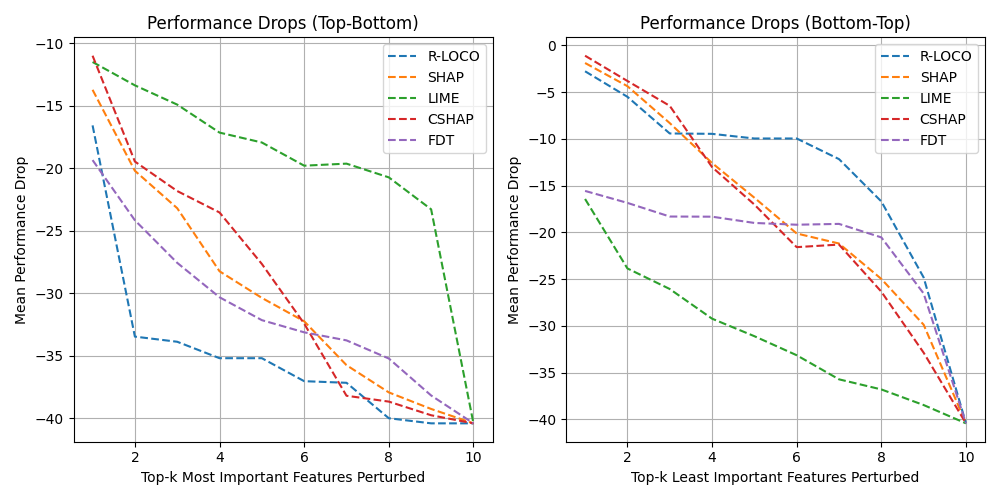}
        \caption{\emph{Diabetes} dataset.}
        \label{fig:stability}
    \end{minipage}
    \hspace{0.05\linewidth}
    \begin{minipage}{0.46\linewidth}
        \centering
        \includegraphics[width=\linewidth]{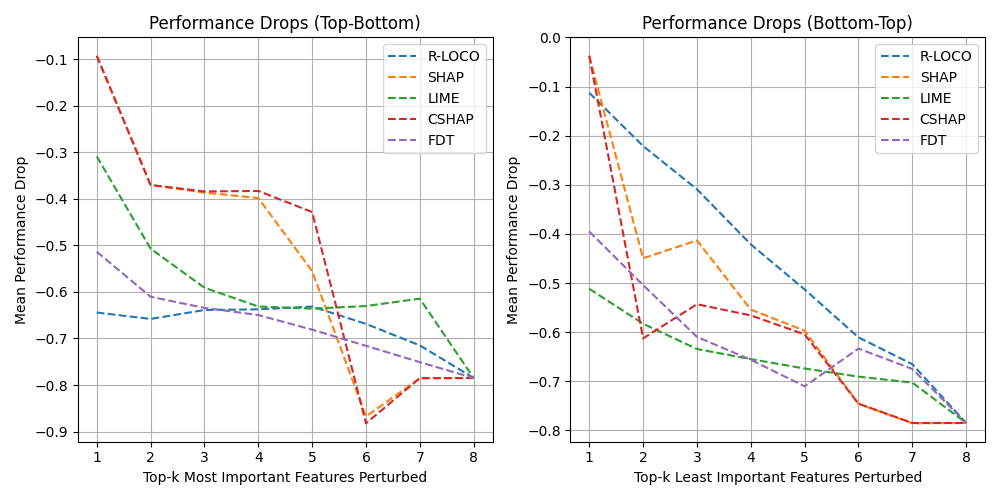}
        \caption{\emph{California} dataset.}
        \label{fig:entropy}
    \end{minipage}
    \label{fig:real_exp}
\end{figure*}
\subsection{Evaluation}
For each model, we identify the $k$ variables that contribute locally to the prediction. We exclude the variables used for defining the region (e.g., in the 1st order model, we discard the variable $X_{6}$), as these variables may also be interpreted as globally important, which could bias the analysis. Instead, we focus solely on the $k$ variables directly involved in the computation of the final prediction. As a measure of true importance, we use the sum of the absolute feature–coefficient products across all functional components that include the feature, analogous to classical functional ANOVA contributions. In some settings, we introduce dummy variables that are irrelevant to the model and mutually independent, included solely to assess how different methods assign importance to uninformative features.

To ensure a consistent comparison focused purely on importance magnitude, we normalise all attribution values. We divide the absolute value of each score by the total absolute sum per observation, thereby restricting all scores to the range $[0, 1]$ and disregarding their original directionality.

We then evaluate the performance of each method by determining how many of the $k$ truly local important variables (those with non-zero coefficients) are correctly identified by their top $k$ most important features (True Positives) and how many irrelevant variables are incorrectly identified as important (False Positives). We also compute the average, as well as the 10th (q0.1) and 95th (q0.95) percentiles of the attributions assigned to non-important variables, which we denote as NI attributions. This assess the tendency of each method to over-attribute importance to irrelevant features.

\subsection{Analysis} 
Table~\ref{tab:results1} presents the aggregated performance metrics of our experiments from 50 runs on the first three synthetic datasets.  The key observations from these results are as follows:
\textbf{Poor Performance of \lsv{} and \lime{}:}
\lsv{} and \lime{} consistently perform poorly across all experiments, with true positive (TP) rates of 30\% to 66\% and false positive (FP) rates of 34\% to 70\% on Table~\ref{tab:results1}. For example, in the 1st-order model, \lsv{} achieves a TP of 41\% and FP of 59\%, while \lime{} achieves TP of 53\% and FP of 47\%, indicating frequent misattributions. Moreover, their performance is almost identical to that of the global approach \loco{}, which is not designed to highlight local importance. 

\textbf{Superior Performance of \ours{}:}
\ours{} outperforms the baselines, with significantly higher accuracy. In the 1st-order model, \ours{} achieves a TP of 97\% and FP of 3\%. Although TP rates decrease slightly as model complexity grows, \ours{} consistently outperforms the baselines. 

\ours{} assigns minimal importance to non-important features (1\% to 4\%), whereas \lsv{} and \lime{} assign 7\% to 18\%. This indicates \ours{} more effectively distinguishes between important and non-important features.


\textbf{Impact of Clustering:}
When using perfect clusters (\ourstrue{}), \ours{} achieves near-perfect performance, suggesting that clustering quality is the key factor for \ours{}' success. Results of \oursinput{} also show that clustering in the feature importance space (\ours{}) yields better results compared to clustering in input space.

\textbf{Summary:}
Overall, the experimental results show that the baseline methods \lsv{} and \lime{} consistently perform poorly to identify locally important features and have performance close to the global approach \loco{}. In contrast, \ours{} and its variants significantly outperform these traditional local attribution methods, providing more locally faithful feature attributions. The effectiveness of \ours{} largely depends on the quality of the clustering process, particularly when clustering is performed in the feature importance space.

\subsection{Impact of Clustering in \ours{}} \label{impact:clustering}

To illustrate how clustering quality can be assessed in our method, we present a qualitative example using the first-order model. We focus on an observation where $X_6$ is positive, implying that the only locally important variables are $X_3$ and $X_4$.\begin{wrapfigure}{r}{0.5\columnwidth}
    \centering
    \includegraphics[width=0.8\linewidth]{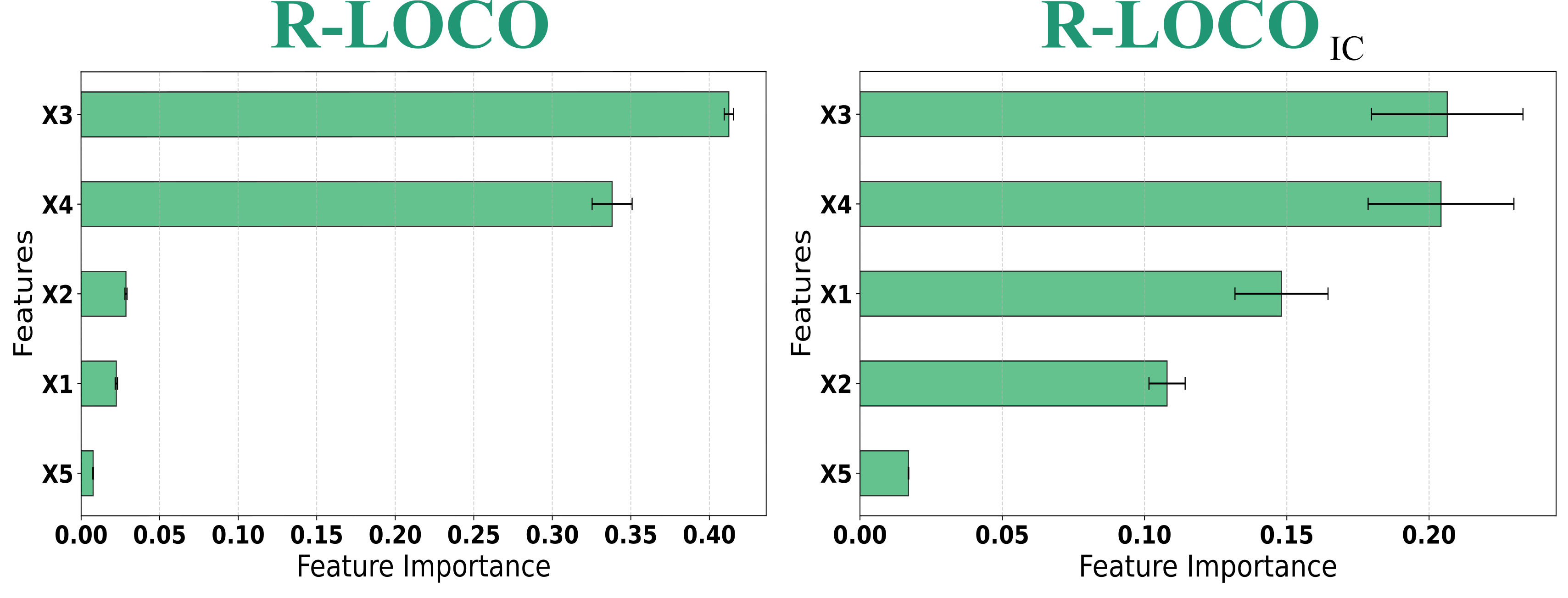}
    \caption{Feature attributions from \ours{} and \oursinput{} on the first-order model, highlighting the intra-group variance of feature importance.}
    \label{fig:qualitative}
\end{wrapfigure}

Figure~\ref{fig:qualitative} compares feature attributions from \ours{} and \oursinput{} (which clusters in the input space), highlighting the variance of feature-importance scores within each cluster. Both methods identify $X_3$ and $X_4$ as the most important variables, but \ours{} exhibits substantially lower intra-cluster variance. Specifically, the average variance across all observations and features is $0.002$ for \ours{}, compared to $0.012$ for \oursinput{}. This lower variance suggests more consistent attributions within clusters, indicating higher clustering quality. By assessing the intra-cluster variance, practitioners can evaluate the clustering quality and make informed choices about clustering methods. 

To further assess interpretability, we train a decision tree to predict the cluster assignments. This yields a transparent and human-readable proxy of the regions identified by each method. Interestingly, \ours{} discovers regions aligned with the true data-generating process by first splitting on the sign of $X_6$, whereas clustering in the input space fails to do so, prioritising other non-important variables.  The learned tree representation is displayed in ~\S\ref{exp:decision_tree}.

\textbf{Parameter Variations:} To further assess clustering robustness, we conducted additional experiments using KMeans with cluster counts of 2, 4, 8, and 20, and Affinity Propagation with damping values of 0.5, 0.6, and 0.9.

Table~\ref{tab:results2} summarizes these findings, underscoring the robustness of \ours{} to clustering variations.

Our key observations are:
\begin{itemize}
    \item For both first and second-order models, as long as intra-cluster variance was minimized and clusters contained enough points, performance remained stable.
    \item Increasing the number of clusters had little adverse effect, provided cluster sizes were sufficiently large.
    \item In second-order interaction models with four ground-truth clusters, performance only declined when clusters were smaller than the true regions. Assigning more clusters than ground-truth also had minimal impact.
\end{itemize}
These results are consistent with Theorem \ref{theo:rloco}, which states that local accuracy may be achieved once the identified cluster is part of the ground-truth cluster or is only minimally contaminated.

\subsubsection{Theoretical Analysis} \label{sec:why_feature_importance_cluster}

To formalize the advantage of clustering in importance space, we analyze the first-order model (\S\ref{model:1st-order_eq}) with independent features $X_i \sim \mathcal{U}[-1, 1]$ for $i=1, \dots, 6$ and the function $f(\mathbf{X}) = (X_1 + X_2) \cdot \mathbf{1}_{X_6 \leq 0} + (X_3 + X_4) \cdot \mathbf{1}_{X_6 > 0}.$ This function operates in two distinct regimes based on the "switching" variable $X_6$: $R_A \coloneqq \{ \mathbf{x} \mid X_6 \le 0 \}$ and $R_B \coloneqq \{ \mathbf{x} \mid X_6 > 0 \}$. We compare the K-Means ($K=2$) decision rules derived from clustering in the input space versus the R-LOCO importance space.

\paragraph{Clustering in Input Space.}
The true cluster centroids are the conditional means $\mathbb{E}[\mathbf{X} \mid R_A]$ and $\mathbb{E}[\mathbf{X} \mid R_B]$, defined as $C_A = (0, 0, 0, 0, 0, -0.5)$ and $C_B = (0, 0, 0, 0, 0, 0.5).$ This follows from feature independence ($\mathbb{E}[X_i \mid R_A] = \mathbb{E}[X_i] = 0$ for $i \le 5$) and the uniform distribution ($\mathbb{E}[X_6 \mid X_6 \le 0] = -0.5$). The K-Means separating direction is $C_A - C_B = (0, 0, 0, 0, 0, -1)$. The decision rule, $\mathbf{x} \in R_A$ if $d(\mathbf{x}, C_A)^2 < d(\mathbf{x}, C_B)^2$, simplifies to a condition based \emph{only} on $X_6$. Therefore, input-space clustering merely rediscovers the "switching" variable $X_6$. The variability from the functionally active features ($X_1, \dots, X_4$) acts as noise, hindering a clean separation.

\paragraph{Clustering in R-LOCO Importance Space.}
Each sample $\mathbf{X}$ is mapped to its R-LOCO importance vector $\gamma(\mathbf{X}) = (\Delta_1, \ldots, \Delta_6)$, where $\Delta_j(\mathbf{X}) = (f(\mathbf{X}) - f_{-j}(\mathbf{X}))^2$ and $f_{-j}(\mathbf{X}) = \mathbb{E}[f(\mathbf{X}) \mid \mathbf{X}_{-j}]$. The importance-space centroids for $R_A$ and $R_B$ are $C_{\gamma, A} = \mathbb{E}[\gamma(\mathbf{X}) \mid R_A] = \bigl(\frac{1}{3}, \frac{1}{3}, 0, 0, 0, \frac{1}{3}\bigr)$ and $C_{\gamma, B}= \mathbb{E}[\gamma(\mathbf{X}) \mid R_B] = \bigl(0, 0, \frac{1}{3}, \frac{1}{3}, 0, \frac{1}{3}\bigr)$, respectively. See \ref{proof:centroids} for detailed derivations. The K-Means separating direction in this space is:
$C_{\gamma, A} - C_{\gamma, B} = \bigl(\frac{1}{3}, \frac{1}{3}, -\frac{1}{3}, -\frac{1}{3}, 0, 0\bigr).$
This decision rule \emph{only} depends on the components $(\Delta_1, \Delta_2, \Delta_3, \Delta_4)$ corresponding to the locally active features. Clustering in importance space successfully identifies \textbf{functional similarity}:
\begin{itemize}
    \item $R_A$ samples have large $(\Delta_1, \Delta_2)$ and $\Delta_{3,4} \approx 0$.
    \item $R_B$ samples have large $(\Delta_3, \Delta_4)$ and $\Delta_{1,2} \approx 0$.
\end{itemize}
In short, input-space clustering is confounded by noise from irrelevant features, while importance-space clustering is robust to this noise and correctly groups samples by their underlying explanatory behaviour.

\paragraph{Remark: Importance Representation and Failure Cases}
We note however a counter-example: $f(X_1, X_2) = X_2 \cdot \operatorname{sgn}(X_1)$, with regimes $R_A = \{X_1 < 0\}$ and $R_B = \{X_1 \ge 0\}$. If we use the squared-loss importance $\Delta_j = (f(\mathbf{X}) - f_{-j}(\mathbf{X}))^2$, we find $f_{-1} = f_{-2} = 0$. This results in an importance vector $\gamma(\mathbf{X}) = (\Delta_1, \Delta_2) = (f(\mathbf{X})^2, f(\mathbf{X})^2) = (X_2^2, X_2^2)$. This representation is identical for both regimes, making them inseparable. The issue is that the squared loss $\Delta_j$ is \textbf{sign-agnostic} and discards the directional information that distinguishes the regimes.

This motivates enriching the importance representation. For instance, including the signed R-LOCO residual, $\delta_j(\mathbf{X}) = f(\mathbf{X}) - f_{-j}(\mathbf{X})$, yields an enriched vector:
\[
\gamma'(\mathbf{X}) = (\Delta_1, \Delta_2, \delta_1, \delta_2) = (X_2^2, X_2^2, f(\mathbf{X}), f(\mathbf{X})).
\]
This enriched vector maps the two regimes to distinct, separable locations:
\[
\gamma'(\mathbf{X}) = (X_2^2, X_2^2, -X_2, -X_2) \text{ if } X_1 < 0,\ \text{and } (X_2^2, X_2^2, X_2, X_2) \text{ if } X_1 \ge 0.
\]

In this space, the regimes are easily distinguished by a clustering algorithm, underscoring the importance of the chosen representation.

\vspace{-0.3cm}
\section{Experiments 2: Real-world Settings}
\label{sec:real_world_experiments}
In the preceding sections, we focused on the piecewise model setting, providing both theoretical and empirical evidence that \lsv{} and \lime{} can fail under ideal conditions (e.g., exact computation and independence assumptions). We also introduced \ours{} to address these shortcomings in a controlled context. 

In this section, we extend our analysis to real-world tabular datasets, where \textbf{feature dependencies naturally arise}, and use widely adopted, high-performing models, XGBoost, for tabular data. We introduce two additional baselines:
\begin{itemize}
    \item \textbf{FDT \cite{laberge2024tackling}:} This approach is conceptually related to ours in that it seeks ``regional'' explanations; however, its objective is to find regions where multiple explanation methods agree. In contrast, \ours{} aims to identify zones of influence that are distinct to the model's behavior, using predictive loss as the criterion. 
    \item \textbf{Clustering SHAP Values (CSHAP):} As described in the original SHAP paper \citep{lundberg2018consistent}, \lsv{} can be clustered to identify distinct zones of influence. Since SHAP does not directly employ group-wise averages, we adapt it for comparison by clustering SHAP values or \lsv{} (using the same algorithm as R-LOCO) and then computing averages for each group.
\end{itemize}

To evaluate the faithfulness of these methods, we measure the change in predictive performance when \emph{zero-masking} features deemed most important (top-$k$) and least important (bottom-$k$) by each approach. See \S\ref{exp:real_details} for details. A larger performance drop when masking the most important features indicates higher fidelity, whereas a smaller performance drop when masking the least important features is desirable.

Figures~\ref{fig:stability} and~\ref{fig:entropy} show the results on the \emph{Diabetes} and \emph{California} datasets from \cite{uci_dataset}, respectively. In Figure \ref{fig:stability}, our method consistently outperformed all baselines. While FDT performed best among the baselines, clustering SHAP values (CSHAP) did not outperform standard SHAP. This highlights that, unlike LOCO features, SHAP values lack sufficient information to effectively identify the distinct zone of influence of the given model, and underscores the critical importance of selecting appropriate inputs for clustering. In Figure~\ref{fig:entropy}, our method was superior for identifying and perturbing the top three most important features, and FDT was slightly better for the remaining, but \ours{} outperformed when removing non-important features, indicating superior region definition. Finally, as shown in Appendix~\ref{sec:runtime-memory}, our method is also significantly faster at inference, since it only requires assigning a cluster to compute feature importance.

\vspace{-0.2cm}
\section{Conclusion}
\vspace{-0.3cm}
We presented a rigorous evaluation of widely used local attribution methods in settings where the true local features are known. Our findings reveal that, despite their intuitive appeal, common local methods (\lsv{}, \lime{}) can fail even under ideal conditions with no feature dependencies and perfect estimations. To address this, we proposed a \emph{regional} method that partitions the input space into homogeneous regions—thus enabling a hybrid approach combining the fidelity of global LOCO importance with more localized attributions. Unlike existing baselines, our framework offers a clear interpretation of a feature’s contribution as the increase in error when that feature is removed.

Our method lies between fully global and purely local approaches: avoiding the instability of pointwise explanations while still preserving instance-level insights, which purely global approaches often lack. \ours{} implicitly assumes each region of the model can be approximated by an additive structure. A key challenge is selecting and refining these regions. One promising direction is to enrich the feature-importance representation (used for clustering) with second- or higher-order interactions, thus capturing more complex dependencies. While our initial analysis focused on independent variables for clarity, we have demonstrated that the approach also outperforms baselines on real-world datasets where dependencies naturally arise.

\section*{Acknowledgements}

We thank the anonymous reviewers for their valuable suggestions, which helped improve this work, especially regarding the counterexample illustrating the failure of the base feature-importance mapping.

\section*{Disclaimer}
This work was in part conducted during Salim I. Amoukou’s PhD at LaMME, Université Paris-Saclay, in collaboration with the Stellantis Group.

\bibliography{bib}
\bibliographystyle{plainnat}

\appendix
\newpage
\section{Proof of Theorem \ref{theo:sv_impossible}} \label{sup:proof_shapdecomposition}
\begin{theorem}
    Let $f$ be a piecewise linear function with $m$ components defined by the collection $\{f_{\lvert A_1}, \dots, f_{\lvert A_k}\}$, where $\cup_{k=1}^m A_k = \mathcal{X}$. The regions $A_k$ are disjoint hyperrectangles, specifically $A_k = \bigotimes_{i = 1}^{p} A_{i,k}$, where $A_{i, k} = [l_{i, k}, r_{i, k}]$ with $l_{i, k}, r_{i, k} \in \overline{\mathbb{R}}$. Each component $f_{\lvert A_k}$ is represented as $f_k(\X) = \sum_{i=1}^p a_{i, k} X_i + b_k$, where the coefficients $a_{i,k}$ and $b_k$ are real numbers. Consequently, $f$ is defined as:
    \begin{align*}
        f(\X) =  \sum_{k=1}^m \left(\sum_{i=1}^p a_{i, k} X_i + b_k \right)\mathds{1}_{A_k}(\X).
    \end{align*}
    Consider an observation $\mathbf{x} = (x_1, \dots, x_p)$ located in a specific region $A_{k^\star}$ for some $k^\star \in \{1, \dots, m\}$. The model's prediction is therefore determined solely by the local model $f_{k^\star}$, i.e., $f(\mathbf{x}) = f_{k^\star}(\mathbf{x})$, and the set of truly active features is $J_{k^\star} = \{i \in [p] \mid a_{i,k^\star} \neq 0\}$. We also define the set of features important globally $G = \{i \in [p] \mid \exists k \text{ s.t. } A_{i,k} \neq \mathbb{R}\}$. Let the covariates $X_i$ be mutually independent. The Local Shapley Value (SV) for the feature-value $X_l=x_l$ is equal to
    \begin{equation*}
        \phi_{x_l} = \sum_{k=1}^m \phi_{x_l}^k,
    \end{equation*}
    and $\phi_{x_l}^k$ is defined as 
    \begin{multline} \label{eq:sup_sv_nn}
        \phi_{x_l}^k = \left(\frac{\mathds{1}_{A_{l, k}}(x_l)}{\mathbb{P}(X_l \in A_{l, k})} - 1 \right)  \sum_{S \subseteq D\setminus\{l\}} w(S) v_k(S) \\
         + a_{l, k} \left( x_{l} - \frac{\mathbb{E}\left[ X_l \mathds{1}_{A_{l, k}}(X_l)\right]}{\mathbb{P}(X_l \in A_{l, k})}\right) \sum_{S \subseteq D\setminus\{l\}} w(S) \times \prod_{i \in S\cup l} \mathds{1}_{A_{i, k}}(x_i) \prod_{j \in \overline{S}}\mathbb{P}(X_i \in A_{i, k}), 
    \end{multline}
    where $w(S)= \frac{1}{p} \binom{|D|-1}{|S|}^{-1}$ and $v_k(S) = \mathbb{E}[f_k(\X)\mathds{1}_{A_k}(\X) \; | \; \XS = \xs]$. Equation (\ref{eq:sup_sv_nn}) demonstrates that even if the model only uses  $f_{k^\star}(\x)$ for a given observation $\x$, the Local SV $\x$ may depend on the coefficients of the unused linear models $f_k$ for $k \in \{1, \dots, m\}\setminus \{k^\star\}$, which are not globally important neither.
    \begin{proof}
    Let's assume that $\prod_{i=1}^p \mathbb{P}(X_i \in A_{i, k}) > 0$ and intercepts $b_k = 0$ for all $k=1,\dots, m$ without loss of generality. Given an observation $\x = (x_1, \dots, x_p)$, we consider the Shapley Value of a feature-value $X_l = x_l$ defined as
    \begin{align*}
        \phi_{\x_l} =  \sum_{S \subseteq D\setminus\{l\}} w(S) \Big[ \Delta_l(S) \Big], 
    \end{align*}
    where $w(S)= \frac{1}{p} \binom{|D|-1}{|S|}^{-1}$, $\Delta_l(S) = v(S \cup l) - v(S)$ represent the marginal contribution, and $v(S) = \mathbb{E}\left[ f(\X) | \XS = \xs \right]$. Note that we can decompose $v(S)$ into $m$ separate terms as following:
    \begin{align*}
        v(S) & = \mathbb{E}\left[ \sum_{k=1}^m \left(\sum_{i=1}^p a_{i, k} X_i \right)\mathds{1}_{A_k}(\X) \;|\; \XS = \xs \right] \\
            & = \sum_{k=1}^m \mathbb{E}\left[\left(\sum_{i=1}^p a_{i, k} X_i \right)\mathds{1}_{A_k}(\X) \;|\; \XS = \xs \right] \\
            & = \sum_{k=1}^m v_k(S)
    \end{align*}
    Hence, we can decompose the Shapley Value of $\phi_{\x_l}$ due to the linearity property of SV as
    \begin{align*}
        \phi_{x_l} = \sum_{k=1}^m \phi^k_{x_l},
    \end{align*}
    where $\phi^k_{x_l}$ corresponds to the SV compute using the value function $v_k(S)$.  Therefore, we only need to prove that $\phi^k_{x_l}$ for $k \neq k^\star$ is not necessarily null to prove the Theorem. We have

    \begin{align*}
        \begin{split}
         v_k(S) & = \mathbb{E}\left[\left(\sum_{i=1}^p a_{i, k} X_i \right)\mathds{1}_{A_k}(\X) \;|\; \XS = \xs \right] \\
         & = \mathbb{E}\left[\left(\sum_{i \in S} a_{i, k} X_i  + \sum_{i \in \bar{S}} a_{i, k} X_i \right) \prod_{j=1}^p \mathds{1}_{A_{j, k}}(X_j) \;|\; \XS = \xs \right] \\
         & = \left(\sum_{i \in S} a_{i, k} \x_i  \right) \prod_{j \in S} \mathds{1}_{A_{j, k}}(x_j) \prod_{j \in \bar{S}} \mathbb{P}(X_j \in A_{j, k}) \\
         & \quad + \sum_{i \in \bar{S}} a_{i, k} \mathbb{E}\left[ X_i \mathds{1}_{A_{i, k}}(X_i)\right] \prod_{j \in \bar{S}: j \neq i}\mathbb{P}(X_j \in A_{j, k}) \prod_{j \in S} \mathds{1}_{A_{j, k}}(x_j).
        \end{split} 
    \end{align*}
Similarly, we can write $v(S \cup l)$ as:
    \begin{align*}
         v_k(S\cup l) & =  \left(\sum_{i \in S\cup l} a_{i, k} x_i  \right) \prod_{j \in S\cup l} \mathds{1}_{A_{j, k}}(x_j) \prod_{j\in \overline{S\cup l}} \mathbb{P}(X_j \in A_{j, k}) \\
         & \quad + \sum_{i \in \overline{S\cup l}} a_{i, k} \mathbb{E}\left[ X_i \mathds{1}_{A_{i, k}}(X_i)\right] \prod_{j \in \overline{S \cup l}: j \neq i}\mathbb{P}(X_j \in A_{j, k}) \prod_{j \in S\cup l} \mathds{1}_{A_{j, k}}(x_j) \\
         & =  \left(\sum_{i \in S} a_{i, k} x_i  \right) \prod_{j \in S} \mathds{1}_{A_{j, k}}(x_j) \prod_{j \in \overline{S}} \mathbb{P}(X_j \in A_{j, k})  \times \frac{\mathds{1}_{A_{l, k}}(x_l)}{\mathbb{P}(X_l \in A_{l, k})}\\
         & \quad + \textcolor{teal}{a_{l, k} x_l   \prod_{j \in S} \mathds{1}_{A_{j, k}}(x_j) \prod_{j \in \overline{S}} \mathbb{P}(X_j \in A_{j, k})  \times \frac{\mathds{1}_{A_{l, k}}(x_l)}{\mathbb{P}(X_l \in A_{l, k})}} \\ 
         & \quad + \sum_{i \in \overline{S}} a_{i, k} \mathbb{E}\left[ X_i \mathds{1}_{A_{i, k}}(X_i)\right] \prod_{j \in \overline{S}: j \neq i}\mathbb{P}(X_j \in A_{j, k}) \prod_{j \in S} \mathds{1}_{A_{j, k}}(x_j) \times \frac{\mathds{1}_{A_{l, k}}(x_l)}{\mathbb{P}(X_l \in A_{l, k})} \\
         & \quad - \textcolor{red}{a_{l, k} \mathbb{E}\left[ X_l \mathds{1}_{A_{l, k}}(X_l)\right] \prod_{j \in \overline{S}: j \neq l}\mathbb{P}(X_j \in A_{j, k}) \prod_{j \in S} \mathds{1}_{A_{j, k}}(x_j) \times \frac{\mathds{1}_{A_{l, k}}(x_l)}{\mathbb{P}(X_l \in A_{l, k})}} \\
         \intertext{The terms highlighted in red and teal respectively represent the negative and positive contributions of the variable $X_l=x_l$ in $v_k(S\cup l)$, the other terms will be put together to form $v_k(S)$ as follows}
         v_k(S\cup l) & = \frac{\mathds{1}_{A_{l, k}}(x_l)}{\mathbb{P}(X_l \in A_{l, k})} v_k(S) \\
         & \quad + \frac{1}{\mathbb{P}(X_l \in A_{l, k})} \prod_{j \in S\cup l} \mathds{1}_{A_{j, k}}(x_j) \prod_{j \in \overline{S}}\mathbb{P}(X_j \in A_{j, k}) \times a_{l, k} \left( x_{l} - \frac{\mathbb{E}\left[ X_l \mathds{1}_{A_{l, k}}(X_l)\right]}{\mathbb{P}(X_l \in A_{l, k})}\right).
         \end{align*}
    Hence, the marginal contribution of coalition $S$ of the SV $\phi^k_{x_l}$ is equal to:
    \begin{align*}
        \Delta^k_l(S) & = v_k(S \cup l) - v_k(S) \\
                      & = v_k(S) \left(\frac{\mathds{1}_{A_{l, k}}(x_l)}{\mathbb{P}(X_l \in A_{l, k})} - 1 \right) \\
                      & \quad + \prod_{j \in S\cup l} \mathds{1}_{A_{j, k}}(x_j) \prod_{j \in \overline{S}}\mathbb{P}(X_j \in A_{j, k}) \times a_{l, k} \left( x_{l} - \frac{\mathbb{E}\left[ X_l \mathds{1}_{A_{l, k}}(X_l)\right]}{\mathbb{P}(X_l \in A_{l, k})}\right)
    \end{align*}
    Finally, we have 
    \begin{multline*}
        \phi_{x_l}^k = \left(\frac{\mathds{1}_{A_{l, k}}(x_l)}{\mathbb{P}(X_l \in A_{l, k})} - 1 \right)  \sum_{S \subseteq D\setminus\{l\}} w(S) v_k(S) \\
         + a_{l, k} \left( x_{l} - \frac{\mathbb{E}\left[ X_l \mathds{1}_{A_{l, k}}(X_l)\right]}{\mathbb{P}(X_l \in A_{l, k})}\right) \sum_{S \subseteq D\setminus\{l\}} w(S) \times \prod_{j \in S\cup l} \mathds{1}_{A_{j, k}}(x_j) \prod_{j \in \overline{S}}\mathbb{P}(X_j \in A_{j, k})
    \end{multline*}
    \end{proof}    
\end{theorem}

\section{Proof of Theorem \ref{theo:rloco}}\label{proof:rloco}

\begin{theorem} 
    Let $f$ be the piecewise linear function from Theorem \ref{theo:sv_impossible}. When the \ours{} clustering algorithm identifies clusters $C_{k^\star}$ within regions $A_{k^\star}$ , the R-LOCO attributions for $\x = (x_1, \dots, x_p) \in A_{k^\star}$, with $k^\star \in \{1, \dots, m\}$ (the active region), are solely influenced by $f_{k^\star}$. They do not depend on coefficients any $a_{i,k^\prime}$ from other regions $A_{k^\prime}$, where $k^\prime \neq k^\star$. \label{app:theo_rloco}
\end{theorem}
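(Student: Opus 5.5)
The plan is to unfold the definition of the \ours{} attribution and check that every quantity entering it is evaluated only at points of $A_{k^\star}$. For $\x \in A_{k^\star}$ assigned to the cluster $C_{k^\star}$, the attribution of feature $j$ is
\begin{align*}
\widehat{\Psi}_{j}(\widehat{P}_{\widehat{A}_{k^\star}}) = \frac{1}{|C_{k^\star}|}\sum_{(\X_i, Y_i) \in C_{k^\star}} \Big[ V\big(f_0(\X_i), Y_i\big) - V\big(f_{-j}(\X_i), Y_i\big) \Big].
\end{align*}
The hypothesis $C_{k^\star} \subseteq A_{k^\star}$ means every $\X_i$ in the sum satisfies $\mathds{1}_{A_k}(\X_i) = \mathds{1}_{k = k^\star}$. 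Hence $f(\X_i) = f_{k^\star}(\X_i) = \sum_{l} a_{l,k^\star} X_{il} + b_{k^\star}$, and $Y_i = f(\X_i)$ in the noiseless setting of the experiments. The terms $Y_i$ and $f_0(\X_i) = f(\X_i)$ therefore involve only the coefficients of $f_{k^\star}$.

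The main obstacle is the leave-one-out predictor $f_{-j}$. If it is taken as the population $\mathbb{E}[f(\X) \mid \X_{-j}]$ over the whole space, it averages $X_j$ across regions and can pick up $a_{i,k'}$ with $k' \neq k^\star$. This happens exactly when $j$ is a splitting coordinate, i.e.\ $A_{j,k^\star} \neq \mathbb{R}$.

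To handle this I would argue that \ours{} computes the regional importance $\Psi_j(P_{A_{k^\star}})$. In this quantity the oracle $f_{-j}$ is the maximiser of $\mathbb{E}_{P_{A_{k^\star}}}[V(f(\X),Y)]$ over $\mathcal{F}_{-j}$, that is, the optimum over the conditional distribution restricted to the region. For the $R^2$ or squared-loss score this is $\mathbb{E}[f_{k^\star}(\X) \mid \X_{-j}, \X \in A_{k^\star}]$. Under independence it equals
\begin{align*}
\sum_{l \neq j} a_{l,k^\star} X_l + a_{j,k^\star}\,\mathbb{E}[X_j \mid X_j \in A_{j,k^\star}] + b_{k^\star}.
\end{align*}
The normaliser $\sigma^2$ is likewise the variance of $f_{k^\star}(\X)$ given $\X \in A_{k^\star}$. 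Each $\widehat{\Delta}_j(\X_i, Y_i)$ is then an explicit function of $(a_{l,k^\star})_l$, $b_{k^\star}$, the point $\X_i$ and the law of $\X$ restricted to $A_{k^\star}$. Indeed it reduces to $a_{j,k^\star}^2\big(X_{ij} - \mathbb{E}[X_j \mid A_{j,k^\star}]\big)^2/\sigma^2$, up to the sign convention of $V$.

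Averaging over $C_{k^\star}$ preserves this dependence structure, so the attribution contains no $a_{i,k'}$ with $k' \neq k^\star$. In particular it vanishes for every $j \notin J_{k^\star}$, in contrast with the expression for $\phi^k_{x_l}$ in Theorem \ref{theo:sv_impossible}.

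I would also state the remark for the global-fit variant. If $\widehat f_{-j}$ is fit on all data and $j$ is not a splitting variable ($A_{j,k} = \mathbb{R}$ for all $k$), then $\mathbb{E}[f \mid \X_{-j}]$ at points of $A_{k^\star}$ still equals the region-$k^\star$ expression, so the same conclusion holds. This is the sense in which the theorem should be read.
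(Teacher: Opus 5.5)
Your argument is correct, but its main line differs from the paper's proof.

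\textbf{The paper's route.} The paper uses the default \ours{} pipeline, where $f_{-j}(\x)=\mathbb{E}[f(\X)\mid \X_{-j}=\x_{-j}]$ is the single global leave-one-out model used to build the clustering representation. It shows $f(\x_l)=f_{k^\star}(\x_l)$ on $C_{k^\star}$. Then, for $j\notin G$ only, it shows $f_{-j}(\x_l)=\mathbb{E}[f_{k^\star}(\X)\mid \X_{-j}=\x_{l,-j}]$. This is exactly your closing remark.

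\textbf{Your route.} Your main argument treats the attribution as the LOCO importance recomputed under $P_{A_{k^\star}}$, with the oracles re-optimized on the region. This corresponds to the variant mentioned in the caption of Figure~\ref{fig:method-desc}, which re-estimates $\widehat f_0,\widehat f_{-j}$ within each cluster. It is not the identity $\Psi_j(P_{A_i})=\mathbb{E}_P[\Delta_j(\X,Y)\mid \X\in A_i]$ of the main text, which uses the global $\Delta_j$.

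\textbf{What each buys.} Your route gives the conclusion for \emph{every} $j$, including splitting coordinates, with an explicit formula and vanishing off $J_{k^\star}$. Under the global fit this is genuinely unavailable. The paper's own computation $\Delta_6=\tfrac14(X_1+X_2-X_3-X_4)^2$ on $R_A$ (Appendix~\ref{proof:centroids}) involves the inactive regime's coefficients. So the paper's proof only establishes the statement for $j\notin G$, and in fact only $A_{j,k^\star}=\mathbb{R}$ is needed. Your diagnosis of this obstacle is accurate. In exchange, the paper's route speaks about the estimator \ours{} actually computes by default.

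\textbf{Two refinements to your version.} First, the practical refit uses $C_{k^\star}$, whose region lies in $A_{k^\star}$ but need not be a rectangle. Your closed form for $f_{-j}$ may then change. The conclusion survives for a simpler reason: every response in the refit sample equals $f_{k^\star}(\X_i)$.

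Second, your choice of a regional $\sigma^2$ matters for the literal claim. With the paper's global $\sigma^2=\mathbb{V}(Y)$, every $V$ carries a common scalar that depends on all regions' coefficients. The paper glosses over this. It is harmless after the per-observation normalization used in the experiments.
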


\begin{proof}
The R-LOCO attribution for a feature $X_j$ within the region defined by cluster $C_{k^\star}$ is given by the average change in the conformity score $V$ when that feature is removed:
$$
\hat{\Psi}_{j}(\hat{P}_{C_{k^\star}}) = \frac{1}{|C_{k^\star}|} \sum_{(\mathbf{x}_l, y_l) \in C_{k^\star}} \left[ V(f(\mathbf{x}_l), y_l) - V(f_{-j}(\mathbf{x}_l), y_l) \right]
$$
where $f$ is the full model and $f_{-j}(\mathbf{x}) = \mathbb{E}[f(\mathbf{X}) | \mathbf{X}_{-j} = \mathbf{x}_{-j}]$ is the model with feature $j$ marginalized out.

The theorem's central assumption is that the clustering algorithm is that the identified cluster $C_{k^\star}$ is a subset of the true model region $A_{k^\star}$, i.e., $C_{k^\star} \subseteq A_{k^\star}$. We analyze the implications of this condition on both terms in the R-LOCO formula.

\begin{enumerate}
    \item \textbf{Analysis of the full model, $f(\mathbf{x}_l)$:}
    For any observation $(\mathbf{x}_l, y_l) \in C_{k^\star}$, our core assumption implies $\mathbf{x}_l \in A_{k^\star}$. By definition, the global piecewise model $f(\mathbf{x}) = \sum_{k=1}^{m} f_k(\mathbf{x}) \mathds{1}_{\mathbf{x} \in A_k}$ simplifies to the active local model for any point within its region. Therefore, for all $\mathbf{x}_l \in C_{k^\star}$:
    $$
    f(\mathbf{x}_l) = f_{k^\star}(\mathbf{x}_l) = \mathbf{a}_{k^\star}^T \mathbf{x}_l + b_{k^\star}
    $$
    This shows that the first term, $V(f(\mathbf{x}_l), y_l)$, depends exclusively on the parameters of the local model $f_{k^\star}$.

    \item \textbf{Analysis of the reduced model, $f_{-j}(\mathbf{x}_l)$:}
    The proof is particularly concerned with features $j$ that are not globally important, meaning they are not used to define the region boundaries. Let $G = \{i \in [p] \mid \exists k \text{ s.t. } A_{i,k} \neq \mathbb{R}\}$ be the set of globally important features. If $j \notin G$, the condition $\mathbf{x} \in A_{k^\star}$ does not depend on the value of $x_j$.
    
    Consequently, for any $\mathbf{x}_l \in C_{k^\star} \subseteq A_{k^\star}$, any other point $\mathbf{X}$ that agrees with $\mathbf{x}_l$ on all other coordinates (i.e., $\mathbf{X}_{-j} = \mathbf{x}_{l, -j}$) must also lie within $A_{k^\star}$. This implies:
    \begin{align*}
        f_{-j}(\mathbf{x}_l) &= \mathbb{E}[f(\mathbf{X}) | \mathbf{X}_{-j} = \mathbf{x}_{l, -j}] \\
        &= \mathbb{E}\left[ \sum_{k=1}^{m} f_k(\mathbf{X}) \mathds{1}_{\mathbf{X} \in A_k} \Big| \mathbf{X}_{-j} = \mathbf{x}_{l, -j} \right] \\
        &= \mathbb{E}[f_{k^\star}(\mathbf{X}) | \mathbf{X}_{-j} = \mathbf{x}_{l, -j}]
    \end{align*}
    The last step holds because the indicator $\mathds{1}_{\mathbf{X} \in A_{k^\star}}$ is always 1 under the conditioning, while all other indicators $\mathds{1}_{\mathbf{X} \in A_{k'}}$ for $k' \neq k^\star$ are 0. The resulting expression for $f_{-j}(\mathbf{x}_l)$ depends only on the definition of $f_{k^\star}$.
\end{enumerate}

\noindent Since both $f(\mathbf{x}_l)$ and $f_{-j}(\mathbf{x}_l)$ depend solely on the parameters $(\mathbf{a}_{k^\star}, b_{k^\star})$ for any observation in $C_{k^\star}$, each term in the summation for $\hat{\Psi}_{j}$ is a function of $f_{k^\star}$ alone. The attribution, being an average of these terms, is therefore completely independent of the coefficients $(\mathbf{a}_{k'}, b_{k'})$ from any other region $k' \neq k^\star$. This concludes the proof.

\subsection{Analysis under Cluster Contamination.} \label{app:rloco_mixed}
 We now consider the practical scenario where the cluster is contaminated, meaning $C_{k^\star} \not\subseteq A_{k^\star}$. Let $C'_{k^\star} = C_{k^\star} \cap A_{k^\star}$ be the set of correctly clustered points and $C''_{k^\star} = C_{k^\star} \setminus A_{k^\star}$ be the set of contaminating points. For any point $\mathbf{x}_l \in C''_{k^\star}$, there exists some region $A_{k'}$ with $k' \neq k^\star$ such that $\mathbf{x}_l \in A_{k'}$. For these contaminating points, the global model evaluates to $f(\mathbf{x}_l) = f_{k'}(\mathbf{x}_l)$. Consequently, the R-LOCO summation for $\hat{\Psi}_{j}$ becomes a mixture:
$$
\hat{\Psi}_{j} = \frac{1}{|C_{k^\star}|} \left( \sum_{\mathbf{x}_l \in C'_{k^\star}} \Delta_j(\mathbf{x}_l) + \sum_{\mathbf{x}_l \in C''_{k^\star}} \Delta_j(\mathbf{x}_l) \right)
$$
where the terms $\Delta_j$ for points in $C'_{k^\star}$ depend on $f_{k^\star}$ as shown above, but the terms for points in $C''_{k^\star}$ depend on the parameters of other local models $f_{k'}$. As a result, the attribution is no longer local. It becomes "contaminated" by the influence of coefficients from the regions where the misclustered points originate. The magnitude of this effect is proportional to the fraction of contaminating points $|C''_{k^\star}| / |C_{k^\star}|$.
\end{proof}

\newpage
\section{Experimental Details}\label{sup:exp_details}

Each dataset is divided into three parts: training, calibration, and test. Initially, we split the data into a training set (75\%) and a test set (25\%). The training set is then further split evenly into a new training and calibration set. The calibration data is used by each method as background for computing approximations.

For the baselines, we compute \lsv{} using the exact explainer from the \href{https://github.com/shap/shap}{SHAP library}.

For \lime{}, we utilize the \href{https://github.com/marcotcr/lime}{LIME package} with default parameters, as described in the original paper \citep{lime}.

For \ours{}, we use XGBoost (default parameters) as the base model to approximate the leave-one-out functions \(\widehat{f}_0\) and \(\widehat{f}_{-j}\). We use Affinity Propagation with damping = 0.8. Further implementation details will be given in the code repository.  We split our data into two sets: one for model fitting and the other for estimating attribution quantities.

\section{Limitations of LIME: Detailled} \label{sup:lime_detailled}
The main idea behind LIME \citep{ribeiro2016why} is to approximate a complex model using a simpler, more interpretable model, such as a linear model, in the vicinity of a given input instance. Given a model $f$, the local explanation $\xi(\x^\star)$ of an instance $\x^\star$ is an interpretable model $g \in G$, where $G$ is the set of linear models, such that
\begin{align}
    \xi(\x^\star) = \arg \min_{g \in G} \mathcal{L}(f, g, \pi_{\x^\star}^h) + \Omega(g),
\end{align}
where $\mathcal{L}(f, g, \pi_{x^\star}^h)$ measure of how unfaithful $g$ is in approximating $f$ over $\pi_{\x^\star}^h$, a measure of locality around $x^\star$ with width $h$, and $\Omega(g)$ is a measure of complexity of the local model $g$. The loss $\mathcal{L}$ is defined as
\begin{align*}
    \mathcal{L}(f, g, \pi_{x^\star}^h) = \sum_{x^\prime \sim P^\prime} \big[f(x^\prime) - g(x^\prime) \big]^2 \pi_{x^\star}^h(\x^\prime).
\end{align*}
In the original implementation \citep{ribeiro2016why}, $\pi_{\x^\star}^h$ is a Gaussian kernel, and the sum is taken over samples $\x^\prime \sim P^\prime$ where $P^\prime = \prod_{i} P_{X_i}$ is the marginal law of the features. 

\begin{figure}[!htb]
    \centering
    \begin{minipage}[b]{0.45\textwidth}
        \centering
        \includegraphics[width=0.9\textwidth]{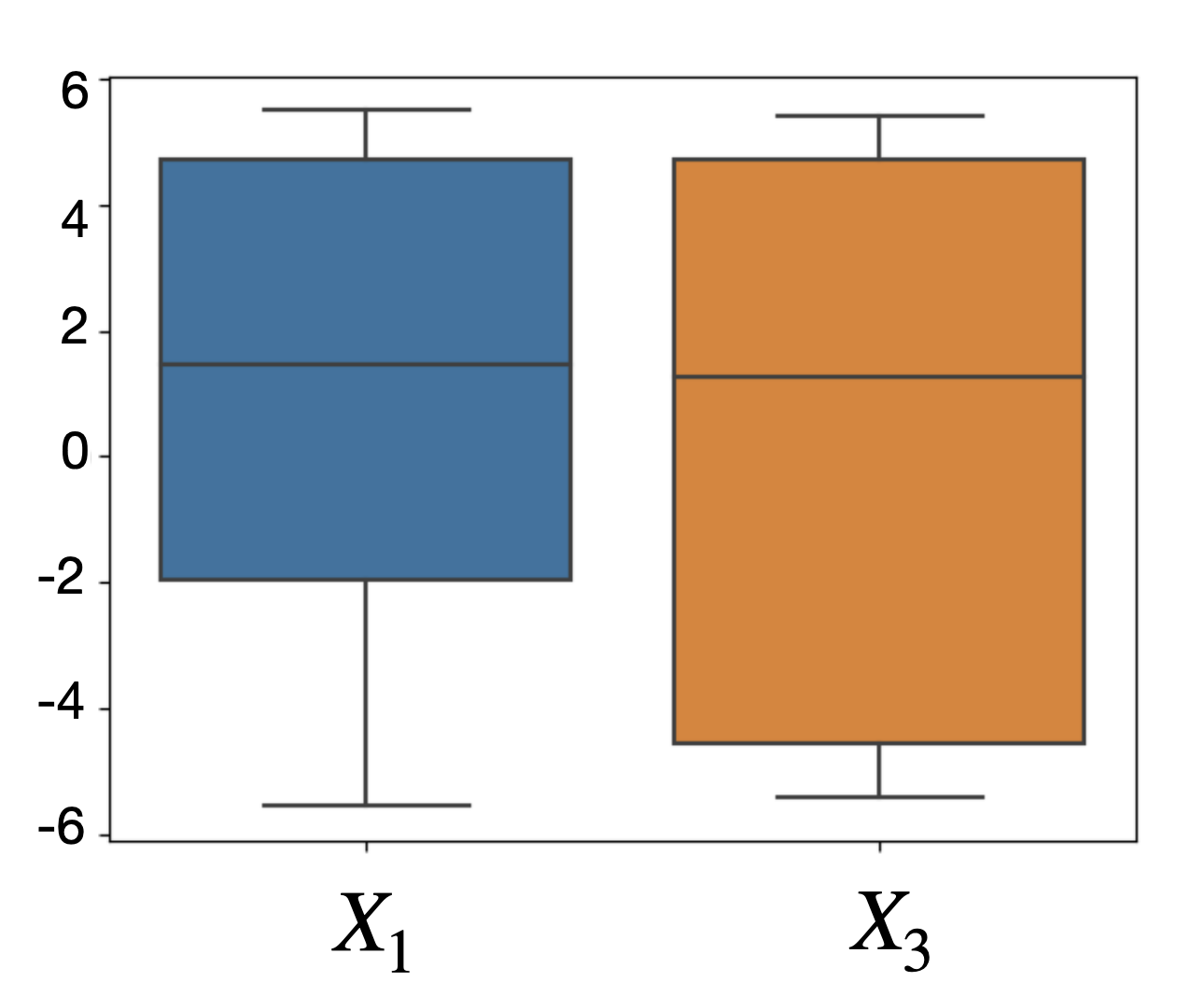}
        \caption{Distribution of LIME coefficients for $X_1$ and $X_3$ among observations with negative $x_5$ for the model (\ref{eq:linear_switch})}
        \label{fig:lime_coef}
    \end{minipage}
    \hfill
    \begin{minipage}[b]{0.45\textwidth}
        \centering
        \includegraphics[width=0.9\textwidth]{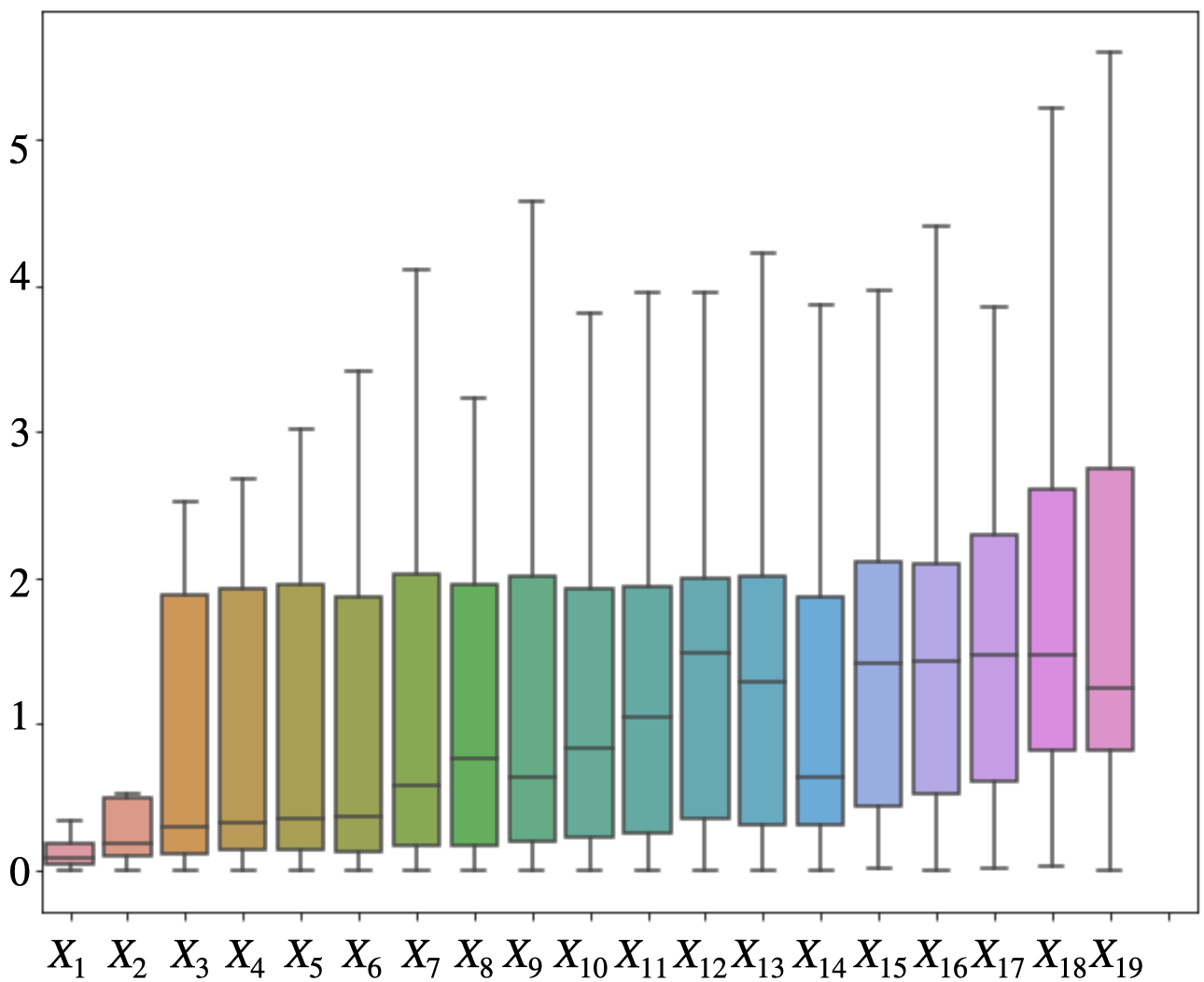}
        \caption{Variability of LIME coefficients wrt bandwidth selection on the German credit dataset $(n=1000, p=20)$ from UCI \citep{uci_dataset}}
        \label{fig:lime_error}
    \end{minipage}
\end{figure}
A primary concern with LIME stems from its reliance on arbitrary heuristics in its definition. Specifically, choosing the sampling distribution $P^\prime$ poses challenges, as the commonly used distribution disregards feature dependencies, and there is no guarantee that the model's local behavior on $P^\prime$ will be consistent with that on the observed data. Another significant issue lies in defining the neighborhood $\pi_{x^\star}^h$ and tuning the kernel width $h$, especially in high-dimension. The stability and sensitivity of LIME are heavily influenced by the selection of the perturbation sampling distribution, the definition of proximity $\pi_{x^\star}^h$ and the bandwidth $h$, which may result in varying explanations for the same instance under slightly varying settings. To illustrate this issue, we apply LIME on the piece-wise linear model defined in (\ref{eq:linear_switch}), with $a_1 = 0, a_2 = 2, a_3 = 0, a_4 = 5$.

In Figure \ref{fig:lime_coef}, the distribution of LIME coefficients for $X_1$ and $X_3$ among observations with negative $x_6$ values shows that LIME assigns a non-null score to both $X_1$ and $X_3$, although the latter is not used locally for the displayed observations. Thus, LIME shares the same problem as L-SV in the piece-wise linear model with independent variables. It is also important to note that such discontinuities are not uncommon, as tabular data often contain discontinuities with categorical variables, and the meaning of constructing a linear approximation in such cases remains unclear. Besides this empirical evidence, we are currently working to compute the theoretical quantity of LIME coefficient for continuous piece-wise-linear functions, yielding impossible results similar to Thereom  \ref{theo:sv_impossible} for Local Shapley Values. 

To demonstrate the sensitivity of LIME to bandwidth selection, we applied it to the German credit dataset $(n=1000, p=20)$ from UCI \citep{uci_dataset}. We trained a RF on $80\%$ of the dataset and computed LIME coefficients on the remaining data using two bandwidths, $h$ and $h^\prime$, of the proximity kernel $\pi_{\x^\star}^h$. We set $h$ using the median heuristic \citep{fukumizu2009kernel, flaxman2016bayesian, garreau2017large}, where $h$ is the median of the pairwise distance of $||\X_i - \X_j||_2$ and assigned $h^\prime = \frac{1}{2} h$. In Figure \ref{fig:lime_error}, we compare the relative absolute error of the LIME coefficients for all variables using the two bandwidths, i.e., $(\L_{\x_i}^{h} - L_{\x_i}^{h^\prime})/L_{\x_i}^{h}$, where $\L_{\x_i}^{h}, \L_{\x_i}^{h^\prime}$ represent the LIME coefficient of the variable $X_i$ using bandwith $h, h^\prime$ respectively. It reveals that the difference between the LIME coefficients could be much different after slightly modifying the bandwidth. Furthermore, we observed that $20\%$ of the coefficients also changed signs. In real-world scenarios, we lack information about the true local importance, making the bandwidth selection process indefinite. Moreover, LIME exhibits issues related to instability or irreproducibility. For example, \cite{zhang2019should, Zafar2019DLIMEAD, Visani2020OptiLIMEOL} have shown that repeated runs of the same setting of the algorithm on the same model and data point can yield different results. This inconsistency stems from the randomness introduced during the generation of the synthetic sample around the input. Several works \citep{Zafar2019DLIMEAD, Zhou2021SLIMESF} attempt to address this issue, using asymptotic analysis of the method to identify the minimum number of the sampled observations required for stability. Another aspect of stability is related to input perturbations. \cite{alvarez2018robustness} show that nearby observations may have completely different LIME coefficients.

\section{Visual Description of \ours{}}

\begin{figure*}[ht!]
    \centering
    \includegraphics[width=0.9\linewidth]{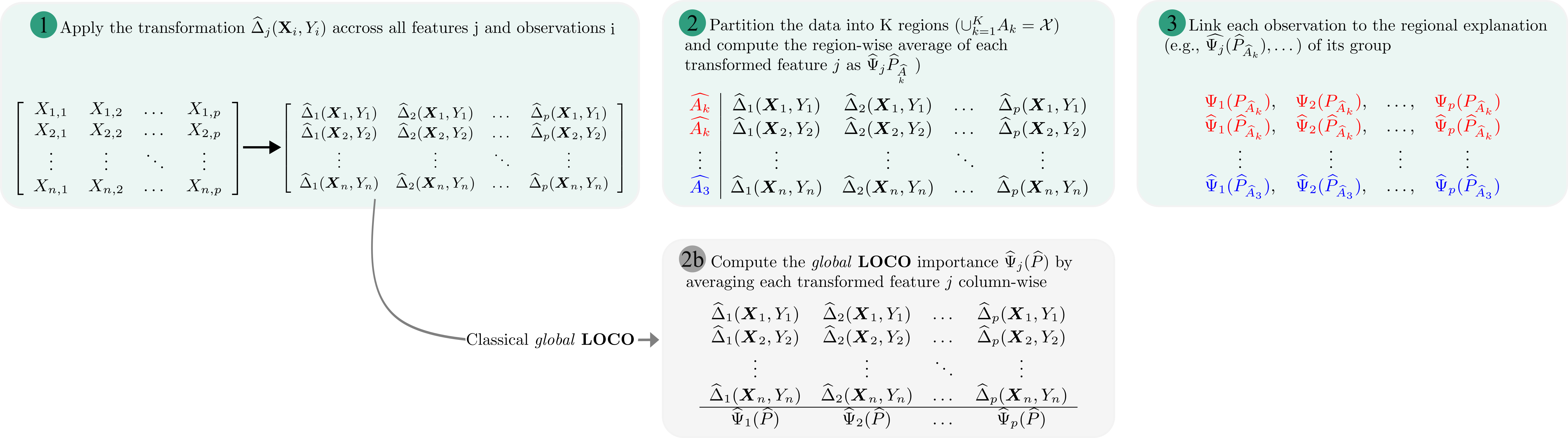}
    \caption{This diagram presents and contrasts our proposed method (\ours{}) with the global approach (\loco{}). Both approaches first transform the features into the feature importance space. The global approach (in \emph{grey}) then averages the importance scores column-wise across all observations, while \ours{} (in \emph{green}) averages the scores within each cluster, providing a more localized attribution. However, note that nothing stops us from applying any other global methods after the clusters are identified. The estimation of the region-based attributions is very generic; we could simply learn a linear model within each identified region and use it as a global explanation, or apply any more advanced global attribution techniques. For example, we could re-estimate the $\widehat{f}_0$, and $\widehat{f}_{-j}$ of $\widehat{\Delta_j}(\X, Y) = V\left(\widehat{f}_0(\X), Y\right) - V\left(\widehat{f}_{-j}(\X), Y\right)$ used to estimate $\widehat{\Psi}_{j}(\widehat{P}_{\widehat{A}_k})$ within the corresponding cluster instead of using the one learned to identify the clusters. We found that both strategies perform similarly in our experiments.}
    \label{fig:method-desc}
\end{figure*}

\section{Learned Tree of Section of \ref{impact:clustering}} \label{exp:decision_tree}

\medskip
The decision rules defining the regions shown in Section \ref{impact:clustering} are:

\paragraph{Regions identified by R-LOCO in feature-importance space}
\begin{verbatim}
|--- X6 <= 0.00
|   |--- X5 <= -0.86
|   |   |--- class: 0
|   |--- X5 >  -0.86
|   |   |--- class: 0
|--- X6 >  0.00
|   |--- X4 <= -0.56
|   |   |--- class: 1
|   |--- X4 >  -0.56
|   |   |--- class: 0
\end{verbatim}

\paragraph{Regions identified by clustering in the input space}
\begin{verbatim}
|--- X5 <= -0.01
|   |--- X2 <= -0.00
|   |   |--- class: 0
|   |--- X2 >  -0.00
|   |   |--- class: 1
|--- X5 >  -0.01
|   |--- X6 <= 0.05
|   |   |--- class: 2
|   |--- X6 >  0.05
|   |   |--- class: 3
\end{verbatim}

\section{Derivations of the centroids for the 1st-order model} \label{proof:centroids}

To formalize the advantage of clustering in importance space, we analyze a model with independent features $X_i \sim \mathcal{U}[-1, 1]$ for $i=1, \dots, 6$ and the function:
\[
f(\mathbf{X}) = (X_1 + X_2) \cdot \mathbf{1}_{X_6 \leq 0} + (X_3 + X_4) \cdot \mathbf{1}_{X_6 > 0}.
\]
This function operates in two distinct regimes based on the "switching" variable $X_6$: $R_A \coloneqq \{ \mathbf{x} \mid X_6 \le 0 \}$ and $R_B \coloneqq \{ \mathbf{x} \mid X_6 > 0 \}$. We compare the K-Means ($K=2$) decision rules derived from clustering in the input space versus the R-LOCO importance space.

\paragraph{Clustering in Input Space}
The true cluster centroids are the conditional means $\mathbb{E}[\mathbf{X} \mid R_A]$ and $\mathbb{E}[\mathbf{X} \mid R_B]$:
\[
C_A = (0, 0, 0, 0, 0, -0.5), \quad
C_B = (0, 0, 0, 0, 0, 0.5).
\]
This follows from feature independence ($\mathbb{E}[X_i \mid R_A] = \mathbb{E}[X_i] = 0$ for $i \le 5$) and the uniform distribution ($\mathbb{E}[X_6 \mid X_6 \le 0] = -0.5$).

The K-Means separating direction is $C_A - C_B = (0, 0, 0, 0, 0, -1)$. The decision rule, $d(\mathbf{x}, C_A)^2 < d(\mathbf{x}, C_B)^2$, simplifies to a condition based \emph{only} on $X_6$. Therefore, input-space clustering merely rediscovers the "switching" variable $X_6$. The variability from the functionally active features ($X_1, \dots, X_4$) acts as noise, hindering a clean separation.

\paragraph{Clustering in R-LOCO Importance Space}
Each sample $\mathbf{X}$ is mapped to its R-LOCO importance vector $\gamma(\mathbf{X}) = (\Delta_1, \ldots, \Delta_6)$, where $\Delta_j(\mathbf{X}) = (f(\mathbf{X}) - f_{-j}(\mathbf{X}))^2$ and $f_{-j}(\mathbf{X}) = \mathbb{E}[f(\mathbf{Y}) \mid \mathbf{Y}_{-j} = \mathbf{X}_{-j}]$.

Consider $\mathbf{X} \in R_A$, where $X_6 \le 0$ and $f(\mathbf{X}) = X_1 + X_2$. We find the components $\Delta_j$ and their expectations $\mathbb{E}[\Delta_j \mid R_A]$:
\begin{itemize}
    \item \textbf{$j=1, 2$ (Active):} $f_{-1}(\mathbf{X}) = \mathbb{E}[Y_1+X_2] = X_2$.
    \begin{itemize}
        \item $\Delta_1 = (X_1+X_2 - X_2)^2 = X_1^2$.
        \item $\mathbb{E}[\Delta_1 \mid R_A] = \mathbb{E}[X_1^2] = 1/3$. By symmetry, $\mathbb{E}[\Delta_2 \mid R_A] = 1/3$.
    \end{itemize}
    \item \textbf{$j=3, 4, 5$ (Inactive):} $f_{-3}(\mathbf{X}) = \mathbb{E}[X_1+X_2] = X_1+X_2$.
    \begin{itemize}
        \item $\Delta_{3,4,5} = (X_1+X_2 - (X_1+X_2))^2 = 0$.
        \item $\mathbb{E}[\Delta_{3,4,5} \mid R_A] = 0$.
    \end{itemize}
    \item \textbf{$j=6$ (Switching):} $f_{-6}(\mathbf{X}) = \mathbb{E}_{Y_6}[(X_1+X_2)\mathbf{1}_{Y_6 \le 0} + (X_3+X_4)\mathbf{1}_{Y_6 > 0}] = \frac{1}{2}(X_1+X_2) + \frac{1}{2}(X_3+X_4)$.
    \begin{itemize}
        \item $\Delta_6 = \left( (X_1+X_2) - \frac{1}{2}(X_1+X_2+X_3+X_4) \right)^2 = \frac{1}{4}(X_1+X_2 - X_3-X_4)^2$.
        \item $\mathbb{E}[\Delta_6 \mid R_A] = \frac{1}{4}\mathbb{E}[X_1^2 + X_2^2 + X_3^2 + X_4^2] = \frac{1}{4}(4 \cdot \frac{1}{3}) = 1/3$.
    \end{itemize}
\end{itemize}
The importance-space centroid for $R_A$ is $C_{\gamma, A} = \bigl(\frac{1}{3}, \frac{1}{3}, 0, 0, 0, \frac{1}{3}\bigr)$.
By symmetry, for $\mathbf{X} \in R_B$ (where $f(\mathbf{X}) = X_3+X_4$), the centroid is $C_{\gamma, B} = \bigl(0, 0, \frac{1}{3}, \frac{1}{3}, 0, \frac{1}{3}\bigr)$.

\paragraph{Decision Rule and Interpretation}
The K-Means separating direction in this space is:
\[
C_{\gamma, A} - C_{\gamma, B} = \bigl(\frac{1}{3}, \frac{1}{3}, -\frac{1}{3}, -\frac{1}{3}, 0, 0\bigr).
\]
This decision rule \emph{only} depends on the components $(\Delta_1, \Delta_2, \Delta_3, \Delta_4)$ corresponding to the locally active features. Clustering in importance space successfully identifies \textbf{functional similarity}:
\begin{itemize}
    \item $R_A$ samples have large $(\Delta_1, \Delta_2)$ and $\Delta_{3,4} \approx 0$.
    \item $R_B$ samples have large $(\Delta_3, \Delta_4)$ and $\Delta_{1,2} \approx 0$.
\end{itemize}
In short, input-space clustering is confounded by noise from irrelevant features and only finds the geometric switch $X_6$. Importance-space clustering is robust to this noise and correctly groups samples by their underlying explanatory behavior.

\paragraph{Remark: Importance Representation and Failure Cases}
We note a known counter-example: $f(X_1, X_2) = X_2 \cdot \operatorname{sgn}(X_1)$, with regimes $R_A = \{X_1 < 0\}$ and $R_B = \{X_1 \ge 0\}$.

If we use the squared-loss importance $\Delta_j = (f(\mathbf{X}) - f_{-j}(\mathbf{X}))^2$, we find $f_{-1} = f_{-2} = 0$. This results in an importance vector $\gamma(\mathbf{X}) = (\Delta_1, \Delta_2) = (f(\mathbf{X})^2, f(\mathbf{X})^2) = (X_2^2, X_2^2)$. This representation is identical for both regimes, making them inseparable. The issue is that the squared loss $\Delta_j$ is \textbf{sign-agnostic} and discards the directional information that distinguishes the regimes.

This motivates enriching the importance representation. For instance, including the signed R-LOCO residual, $\delta_j(\mathbf{X}) = f(\mathbf{X}) - f_{-j}(\mathbf{X})$, yields an enriched vector:
\[
\gamma'(\mathbf{X}) = (\Delta_1, \Delta_2, \delta_1, \delta_2) = (X_2^2, X_2^2, f(\mathbf{X}), f(\mathbf{X})).
\]
This enriched vector maps the two regimes to distinct, separable locations:
\[
\gamma'(\mathbf{X}) = \begin{cases}
(X_2^2, X_2^2, -X_2, -X_2) & \text{if } X_1 < 0 \text{ (Regime } R_A\text{)}, \\
(X_2^2, X_2^2, X_2, X_2)  & \text{if } X_1 \ge 0 \text{ (Regime } R_B\text{)}.
\end{cases}
\]
In this space, the regimes are easily distinguished by a clustering algorithm, underscoring the importance of the chosen representation.

\section{Evaluation methodology for Real-World Data} \label{exp:real_details}

\label{sec:evaluation-methodology}

\subsection{Definitions}
We begin by outlining the core components of our evaluation framework:

\begin{itemize}
    \item \textbf{Dataset} $\mathcal{D}$: A collection of $n$ samples. Each sample is a pair $(\mathbf{x}, y)$, where $\mathbf{x}$ denotes the input and $y$ the ground-truth target.
    \item \textbf{Input} $\mathbf{x}$: A $d$-dimensional vector of numerical features.
    \item \textbf{Model} $f(\mathbf{x})$: The predictive model that maps $\mathbf{x}$ to a prediction.
    \item \textbf{Attribution Method} $A(\mathbf{x})$: A function that assigns a positive importance score to each of the $d$ input features. These scores correspond to normalized attributions from each explanation method (e.g., R-LOCO, SHAP).
    \item \textbf{Error Function} $E(\mathbf{x}, y)$: A metric quantifying the discrepancy between the model’s prediction for $\mathbf{x}$ and the true label $y$:
    \begin{itemize}
        \item \textbf{Classification:} $E(\mathbf{x}, y) = 1$ if the prediction is incorrect, $0$ otherwise.
        \item \textbf{Regression:} $E(\mathbf{x}, y)$ may be the absolute difference between the predicted and true values.
    \end{itemize}
\end{itemize}

\subsection{Masking Procedure}
To assess feature importance, we apply a \emph{masking} (or \emph{occlusion}) technique as follows:

\begin{enumerate}
    \item For each input $\mathbf{x}$, compute the feature importance scores using $A(\mathbf{x})$.
    \item Identify the indices of:
    \begin{itemize}
        \item The top-$k$ most important features (\textbf{Top-$k$ Indices}).
        \item The bottom-$k$ least important features (\textbf{Bottom-$k$ Indices}).
    \end{itemize}
    \item Create two modified versions of $\mathbf{x}$:
    \begin{align*}
        \mathbf{x}^{\text{masked,top-}k} &: \text{Replace values at Top-$k$ indices with zero}, \\
        \mathbf{x}^{\text{masked,bottom-}k} &: \text{Replace values at Bottom-$k$ indices with zero}.
    \end{align*}
\end{enumerate}

\subsection{Error Calculation}
We compute the average model error over the dataset under three conditions:

\paragraph{Original Average Error (Baseline):}
\[
\text{AvgError}_{\text{Original}} = \frac{1}{n} \sum_{(\mathbf{x}, y) \in \mathcal{D}} E(\mathbf{x}, y)
\]

\paragraph{Top-$k$ Masked Average Error:}
\[
\text{AvgError}_{\text{Top-}k} = \frac{1}{n} \sum_{(\mathbf{x}, y) \in \mathcal{D}} E(\mathbf{x}^{\text{masked,top-}k}, y)
\]

\paragraph{Bottom-$k$ Masked Average Error:}
\[
\text{AvgError}_{\text{Bottom-}k} = \frac{1}{n} \sum_{(\mathbf{x}, y) \in \mathcal{D}} E(\mathbf{x}^{\text{masked,bottom-}k}, y)
\]

\subsection{Final Metrics for Figures}
The values shown in Figures~1 and~2 represent the \emph{change in model error} due to masking, computed as follows:

\paragraph{Top-$k$ Error Change (Left Figure):}
Indicates the degradation in performance after removing the Top-$k$ most important features:
\[
\text{ChangeInError}_{\text{Top}}(k)
= \text{AvgError}_{\text{Original}} - \text{AvgError}_{\text{Top-}k}.
\]

\paragraph{Bottom-$k$ Error Change (Right Figure):}
Reflects the performance change after removing the Top-$k$ least important features:
\[
\text{ChangeInError}_{\text{Bottom}}(k)
= \text{AvgError}_{\text{Original}} - \text{AvgError}_{\text{Bottom-}k}.
\]

\section{Runtime and Memory Usage Analysis}
\label{sec:runtime-memory}

We find that R-LOCO is significantly faster at inference time than the baselines. While our method incurs additional computational cost during training (due to clustering and model fitting), it performs inference with minimal overhead, as only the cluster assignment is required per sample. A similar pattern is observed for memory usage.

In terms of scalability, R-LOCO’s \textbf{training} cost grows approximately linearly with the number of features. This behaviour is consistent with widely used explanation approaches such as SHAPLoss and its variants \citep{covert2020explaining}. Importantly, this training step is performed only once.

Moreover, R-LOCO can be further optimised for tree-based models using efficient approximation methods such as \textit{Projected Forests} \citep{benard2021shaff, amoukou2021consistent}, where a single model can be reused to compute multiple conditional expectations efficiently.

\begin{table}[!ht]
\centering
\begin{tabular}{lcccccc}
\toprule
\textbf{Method} & \textbf{Train (s)} & \textbf{Infer (s)} & \textbf{Total (s)} & \textbf{Train (MB)} & \textbf{Infer (MB)} & \textbf{Peak (MB)} \\
\midrule
R-LOCO & 33.521 & 0.032 & 33.553 & 1687.9 & 8.6 & 2743.0 \\
LIME   & 0.038  & 34.684 & 34.722 & 0.5 & 2.3 & 1.0 \\
SHAP   & 0.002  & 20.698 & 20.700 & 0.0 & 0.8 & 6.8 \\
\bottomrule
\end{tabular}
\caption{Runtime and memory usage comparison on the California dataset.}
\label{tab:runtime-california}
\end{table}

\begin{table}[!ht]
\centering
\begin{tabular}{lcccccc}
\toprule
\textbf{Method} & \textbf{Train (s)} & \textbf{Infer (s)} & \textbf{Total (s)} & \textbf{Train (MB)} & \textbf{Infer (MB)} & \textbf{Peak (MB)} \\
\midrule
R-LOCO & 1.611 & 0.010 & 1.621 & 12.2 & 0.5 & 1.7 \\
SHAP   & 0.002 & 1.433 & 1.436 & 0.0 & 145.0 & 16.7 \\
LIME   & 0.008 & 0.838 & 0.846 & 0.2 & 1.3 & 0.4 \\
\bottomrule
\end{tabular}
\caption{Runtime and memory usage comparison on the Diabetes dataset.}
\label{tab:runtime-diabetes}
\end{table}

\newpage
\section*{NeurIPS Paper Checklist}



\begin{enumerate}

\item {\bf Claims}
    \item[] Question: Do the main claims made in the abstract and introduction accurately reflect the paper's contributions and scope?
    \item[] Answer:  \answerYes{}
    \item[] Justification:  Yes, the main claims in the abstract and introduction accurately reflect the paper’s contributions and scope. We clearly state the identified limitations of L-SV and LIME.  Specifically, we argue that a sound local attribution method should not assign importance to features that do not influence the model output (e.g., features with zero coefficients in a linear model), and lack statistical dependence with functionality-relevant features. We introduce R-LOCO as a method to bridge local and global explanations, which overcomes these limitations. These claims are supported by theoretical results (Section 2, Theorem \ref{theo:sv_impossible}; Section 4.1, Theorem \ref{theo:rloco}, Theoretical analysis in \S\ref{sec:why_feature_importance_cluster}) and empirical evidence in controlled (Section 5) and real-world settings (Section 6).
    \item[] Guidelines:
    \begin{itemize}
        \item The answer NA means that the abstract and introduction do not include the claims made in the paper.
        \item The abstract and/or introduction should clearly state the claims made, including the contributions made in the paper and important assumptions and limitations. A No or NA answer to this question will not be perceived well by the reviewers. 
        \item The claims made should match theoretical and experimental results, and reflect how much the results can be expected to generalize to other settings. 
        \item It is fine to include aspirational goals as motivation as long as it is clear that these goals are not attained by the paper. 
    \end{itemize}

\item {\bf Limitations}
    \item[] Question: Does the paper discuss the limitations of the work performed by the authors?
    \item[] Answer: \answerYes{} 
    \item[] Justification: Most of the experiments and theoretical results in this paper focus on piecewise linear models. Extending these results to other settings should be done with caution, particularly because the notion of a "ground truth explanation" is difficult to define. Our second theorem on the utility of R-LOCO is proved under the condition that clusters either align with or are fully contained within the true underlying regions. To address more realistic scenarios, we also provide an analysis of cases where this assumption does not hold (See in \S\ref{app:rloco_mixed} or \ref{sec:why_feature_importance_cluster}), offering deeper insight into R-LOCO’s behavior under imperfect clustering. While the independence assumption may be seen as a limitation, we made it primarily to isolate the effects of attribution without confounding dependencies. In addition, we do evaluate R-LOCO on real-world datasets that naturally include feature dependencies, and our results demonstrate that the method remains effective and better than the baselines in these settings.
    \item[] Guidelines:
    \begin{itemize}
        \item The answer NA means that the paper has no limitation while the answer No means that the paper has limitations, but those are not discussed in the paper. 
        \item The authors are encouraged to create a separate "Limitations" section in their paper.
        \item The paper should point out any strong assumptions and how robust the results are to violations of these assumptions (e.g., independence assumptions, noiseless settings, model well-specification, asymptotic approximations only holding locally). The authors should reflect on how these assumptions might be violated in practice and what the implications would be.
        \item The authors should reflect on the scope of the claims made, e.g., if the approach was only tested on a few datasets or with a few runs. In general, empirical results often depend on implicit assumptions, which should be articulated.
        \item The authors should reflect on the factors that influence the performance of the approach. For example, a facial recognition algorithm may perform poorly when image resolution is low or images are taken in low lighting. Or a speech-to-text system might not be used reliably to provide closed captions for online lectures because it fails to handle technical jargon.
        \item The authors should discuss the computational efficiency of the proposed algorithms and how they scale with dataset size.
        \item If applicable, the authors should discuss possible limitations of their approach to address problems of privacy and fairness.
        \item While the authors might fear that complete honesty about limitations might be used by reviewers as grounds for rejection, a worse outcome might be that reviewers discover limitations that aren't acknowledged in the paper. The authors should use their best judgment and recognize that individual actions in favor of transparency play an important role in developing norms that preserve the integrity of the community. Reviewers will be specifically instructed to not penalize honesty concerning limitations.
    \end{itemize}

\item {\bf Theory assumptions and proofs}
    \item[] Question: For each theoretical result, does the paper provide the full set of assumptions and a complete (and correct) proof?
    \item[] Answer: \answerYes{} 
    \item[] Justification: 
\begin{itemize}
    \item For Theorem \ref{theo:sv_impossible} (Limitations of L-SV for piecewise linear functions): The assumptions (piecewise linear function f with m components, disjoint hyperrectangle regions $A_k$, independent covariates) are stated within the theorem. The full proof is provided in Appendix \ref{sup:proof_shapdecomposition}.
    \item For Theorem \ref{theo:rloco} (\ours{} behaviour for piecewise linear functions): The assumptions (piecewise linear function from Theorem \ref{theo:sv_impossible}, \ours{} clustering algorithm correctly identifying regions corresponding to or included in $A_k$) are stated. A detailed proof is in Appendix \ref{proof:rloco}. 
    \item For the theoretical analysis of \ours{} in \S\ref{app:rloco_mixed}, all assumptions and complete proofs are also given.
\end{itemize}

    \item[] Guidelines:
    \begin{itemize}
        \item The answer NA means that the paper does not include theoretical results. 
        \item All the theorems, formulas, and proofs in the paper should be numbered and cross-referenced.
        \item All assumptions should be clearly stated or referenced in the statement of any theorems.
        \item The proofs can either appear in the main paper or the supplemental material, but if they appear in the supplemental material, the authors are encouraged to provide a short proof sketch to provide intuition. 
        \item Inversely, any informal proof provided in the core of the paper should be complemented by formal proofs provided in appendix or supplemental material.
        \item Theorems and Lemmas that the proof relies upon should be properly referenced. 
    \end{itemize}

    \item {\bf Experimental result reproducibility}
    \item[] Question: Does the paper fully disclose all the information needed to reproduce the main experimental results of the paper to the extent that it affects the main claims and/or conclusions of the paper (regardless of whether the code and data are provided or not)?
    \item[] Answer: \answerYes{} 
    \item[] Justification: Indeed, we have provided all essential details required to conduct our experiments. Details regarding model setups (Section 5.1), R-LOCO variants (Section 5), evaluation metrics (Section 5.2), baseline parameters and R-LOCO implementation specifics (Appendix C), and real-world experiment configurations (Section 6) are included. The code will be released upon acceptance.
    \item[] Guidelines:
    \begin{itemize}
        \item The answer NA means that the paper does not include experiments.
        \item If the paper includes experiments, a No answer to this question will not be perceived well by the reviewers: Making the paper reproducible is important, regardless of whether the code and data are provided or not.
        \item If the contribution is a dataset and/or model, the authors should describe the steps taken to make their results reproducible or verifiable. 
        \item Depending on the contribution, reproducibility can be accomplished in various ways. For example, if the contribution is a novel architecture, describing the architecture fully might suffice, or if the contribution is a specific model and empirical evaluation, it may be necessary to either make it possible for others to replicate the model with the same dataset, or provide access to the model. In general. releasing code and data is often one good way to accomplish this, but reproducibility can also be provided via detailed instructions for how to replicate the results, access to a hosted model (e.g., in the case of a large language model), releasing of a model checkpoint, or other means that are appropriate to the research performed.
        \item While NeurIPS does not require releasing code, the conference does require all submissions to provide some reasonable avenue for reproducibility, which may depend on the nature of the contribution. For example
        \begin{enumerate}
            \item If the contribution is primarily a new algorithm, the paper should make it clear how to reproduce that algorithm.
            \item If the contribution is primarily a new model architecture, the paper should describe the architecture clearly and fully.
            \item If the contribution is a new model (e.g., a large language model), then there should either be a way to access this model for reproducing the results or a way to reproduce the model (e.g., with an open-source dataset or instructions for how to construct the dataset).
            \item We recognize that reproducibility may be tricky in some cases, in which case authors are welcome to describe the particular way they provide for reproducibility. In the case of closed-source models, it may be that access to the model is limited in some way (e.g., to registered users), but it should be possible for other researchers to have some path to reproducing or verifying the results.
        \end{enumerate}
    \end{itemize}

\item {\bf Open access to data and code}
    \item[] Question: Does the paper provide open access to the data and code, with sufficient instructions to faithfully reproduce the main experimental results, as described in supplemental material?
    \item[] Answer: \answerYes{} 
    \item[] Justification: We report on generating the synthetic experiments (Section 5.1),  citations for the open source data used and commit to releasing the code upon acceptance.
    \item[] Guidelines:
    \begin{itemize}
        \item The answer NA means that paper does not include experiments requiring code.
        \item Please see the NeurIPS code and data submission guidelines (\url{https://nips.cc/public/guides/CodeSubmissionPolicy}) for more details.
        \item While we encourage the release of code and data, we understand that this might not be possible, so “No” is an acceptable answer. Papers cannot be rejected simply for not including code, unless this is central to the contribution (e.g., for a new open-source benchmark).
        \item The instructions should contain the exact command and environment needed to run to reproduce the results. See the NeurIPS code and data submission guidelines (\url{https://nips.cc/public/guides/CodeSubmissionPolicy}) for more details.
        \item The authors should provide instructions on data access and preparation, including how to access the raw data, preprocessed data, intermediate data, and generated data, etc.
        \item The authors should provide scripts to reproduce all experimental results for the new proposed method and baselines. If only a subset of experiments are reproducible, they should state which ones are omitted from the script and why.
        \item At submission time, to preserve anonymity, the authors should release anonymized versions (if applicable).
        \item Providing as much information as possible in supplemental material (appended to the paper) is recommended, but including URLs to data and code is permitted.
    \end{itemize}

\item {\bf Experimental setting/details}
    \item[] Question: Does the paper specify all the training and test details (e.g., data splits, hyperparameters, how they were chosen, type of optimizer, etc.) necessary to understand the results?
    \item[] Answer: \answerYes{} 
    \item[] Justification: Yes, the experimental details are all given. Specifically, data splitting procedures are described in Appendix C. Hyperparameters for baseline methods like LIME (defaults) and R-LOCO's Affinity Propagation (damping=0.8) are mentioned in Appendix C. Section 5.3 explores variations in clustering parameters for R-LOCO. The models being explained (e.g., piecewise linear, XGBoost) are also specified.
    \item[] Guidelines:
    \begin{itemize}
        \item The answer NA means that the paper does not include experiments.
        \item The experimental setting should be presented in the core of the paper to a level of detail that is necessary to appreciate the results and make sense of them.
        \item The full details can be provided either with the code, in appendix, or as supplemental material.
    \end{itemize}

\item {\bf Experiment statistical significance}
    \item[] Question: Does the paper report error bars suitably and correctly defined or other appropriate information about the statistical significance of the experiments?
    \item[] Answer: \answerYes{} 
    \item[] Justification:  The analysis section (Section 5.3) mentions that results presented are aggregated over 50 runs. Table 1 and Table 2 report mean values and quantile ranges (q0.1 -- q0.95 for 'NI attr.'), which provide information about the distribution and variability of results.
    \item[] Guidelines:
    \begin{itemize}
        \item The answer NA means that the paper does not include experiments.
        \item The authors should answer "Yes" if the results are accompanied by error bars, confidence intervals, or statistical significance tests, at least for the experiments that support the main claims of the paper.
        \item The factors of variability that the error bars are capturing should be clearly stated (for example, train/test split, initialization, random drawing of some parameter, or overall run with given experimental conditions).
        \item The method for calculating the error bars should be explained (closed form formula, call to a library function, bootstrap, etc.)
        \item The assumptions made should be given (e.g., Normally distributed errors).
        \item It should be clear whether the error bar is the standard deviation or the standard error of the mean.
        \item It is OK to report 1-sigma error bars, but one should state it. The authors should preferably report a 2-sigma error bar than state that they have a 96\% CI, if the hypothesis of Normality of errors is not verified.
        \item For asymmetric distributions, the authors should be careful not to show in tables or figures symmetric error bars that would yield results that are out of range (e.g. negative error rates).
        \item If error bars are reported in tables or plots, The authors should explain in the text how they were calculated and reference the corresponding figures or tables in the text.
    \end{itemize}

\item {\bf Experiments compute resources}
    \item[] Question: For each experiment, does the paper provide sufficient information on the computer resources (type of compute workers, memory, time of execution) needed to reproduce the experiments?
    \item[] Answer: \answerYes{} 
    \item[] Justification: We execute all experiments on a MacBook Pro M2, with none of them being particularly costly to conduct. All Results are obtainable within a few hours.
    \item[] Guidelines:
    \begin{itemize}
        \item The answer NA means that the paper does not include experiments.
        \item The paper should indicate the type of compute workers CPU or GPU, internal cluster, or cloud provider, including relevant memory and storage.
        \item The paper should provide the amount of compute required for each of the individual experimental runs as well as estimate the total compute. 
        \item The paper should disclose whether the full research project required more compute than the experiments reported in the paper (e.g., preliminary or failed experiments that didn't make it into the paper). 
    \end{itemize}
    
\item {\bf Code of ethics}
    \item[] Question: Does the research conducted in the paper conform, in every respect, with the NeurIPS Code of Ethics \url{https://neurips.cc/public/EthicsGuidelines}?
    \item[] Answer: \answerYes{} 
    \item[] Justification: The research conforms to the NeurIPS Code of Ethics. 
    \item[] Guidelines:
    \begin{itemize}
        \item The answer NA means that the authors have not reviewed the NeurIPS Code of Ethics.
        \item If the authors answer No, they should explain the special circumstances that require a deviation from the Code of Ethics.
        \item The authors should make sure to preserve anonymity (e.g., if there is a special consideration due to laws or regulations in their jurisdiction).
    \end{itemize}

\item {\bf Broader impacts}
    \item[] Question: Does the paper discuss both potential positive societal impacts and negative societal impacts of the work performed?
    \item[] Answer: \answerYes{} 
    \item[] Justification: This paper focuses primarily on the positive social impact by improving model understanding and trustworthiness. This work advances the analysis of widely used explainability tools that play a crucial role in high-stakes fields such as healthcare. We highlight their limitations and propose a
more theoretically sound and empirically robust approach to improve interpretability and reliability in model explanations.
    \item[] Guidelines:
    \begin{itemize}
        \item The answer NA means that there is no societal impact of the work performed.
        \item If the authors answer NA or No, they should explain why their work has no societal impact or why the paper does not address societal impact.
        \item Examples of negative societal impacts include potential malicious or unintended uses (e.g., disinformation, generating fake profiles, surveillance), fairness considerations (e.g., deployment of technologies that could make decisions that unfairly impact specific groups), privacy considerations, and security considerations.
        \item The conference expects that many papers will be foundational research and not tied to particular applications, let alone deployments. However, if there is a direct path to any negative applications, the authors should point it out. For example, it is legitimate to point out that an improvement in the quality of generative models could be used to generate deepfakes for disinformation. On the other hand, it is not needed to point out that a generic algorithm for optimizing neural networks could enable people to train models that generate Deepfakes faster.
        \item The authors should consider possible harms that could arise when the technology is being used as intended and functioning correctly, harms that could arise when the technology is being used as intended but gives incorrect results, and harms following from (intentional or unintentional) misuse of the technology.
        \item If there are negative societal impacts, the authors could also discuss possible mitigation strategies (e.g., gated release of models, providing defenses in addition to attacks, mechanisms for monitoring misuse, mechanisms to monitor how a system learns from feedback over time, improving the efficiency and accessibility of ML).
    \end{itemize}
    
\item {\bf Safeguards}
    \item[] Question: Does the paper describe safeguards that have been put in place for responsible release of data or models that have a high risk for misuse (e.g., pretrained language models, image generators, or scraped datasets)?
    \item[] Answer: \answerNA{} 
    \item[] Justification: Not applicable. The paper does not release new large-scale models or datasets with high intrinsic risk for misuse.
    \item[] Guidelines:
    \begin{itemize}
        \item The answer NA means that the paper poses no such risks.
        \item Released models that have a high risk for misuse or dual-use should be released with necessary safeguards to allow for controlled use of the model, for example by requiring that users adhere to usage guidelines or restrictions to access the model or implementing safety filters. 
        \item Datasets that have been scraped from the Internet could pose safety risks. The authors should describe how they avoided releasing unsafe images.
        \item We recognize that providing effective safeguards is challenging, and many papers do not require this, but we encourage authors to take this into account and make a best faith effort.
    \end{itemize}

\item {\bf Licenses for existing assets}
    \item[] Question: Are the creators or original owners of assets (e.g., code, data, models), used in the paper, properly credited and are the license and terms of use explicitly mentioned and properly respected?
    \item[] Answer: \answerYes{} 
    \item[] Justification: Yes, all data are correctly cited. 
    \item[] Guidelines:
    \begin{itemize}
        \item The answer NA means that the paper does not use existing assets.
        \item The authors should cite the original paper that produced the code package or dataset.
        \item The authors should state which version of the asset is used and, if possible, include a URL.
        \item The name of the license (e.g., CC-BY 4.0) should be included for each asset.
        \item For scraped data from a particular source (e.g., website), the copyright and terms of service of that source should be provided.
        \item If assets are released, the license, copyright information, and terms of use in the package should be provided. For popular datasets, \url{paperswithcode.com/datasets} has curated licenses for some datasets. Their licensing guide can help determine the license of a dataset.
        \item For existing datasets that are re-packaged, both the original license and the license of the derived asset (if it has changed) should be provided.
        \item If this information is not available online, the authors are encouraged to reach out to the asset's creators.
    \end{itemize}

\item {\bf New assets}
    \item[] Question: Are new assets introduced in the paper well documented and is the documentation provided alongside the assets?
    \item[] Answer: \answerNA{} 
    \item[] Justification: Not applicable. The paper primarily introduces a new method, R-LOCO. Code for this method is a new asset that is committed to be released upon acceptance (as stated in answers to Q4 and Q5). 
    \item[] Guidelines:
    \begin{itemize}
        \item The answer NA means that the paper does not release new assets.
        \item Researchers should communicate the details of the dataset/code/model as part of their submissions via structured templates. This includes details about training, license, limitations, etc. 
        \item The paper should discuss whether and how consent was obtained from people whose asset is used.
        \item At submission time, remember to anonymize your assets (if applicable). You can either create an anonymized URL or include an anonymized zip file.
    \end{itemize}

\item {\bf Crowdsourcing and research with human subjects}
    \item[] Question: For crowdsourcing experiments and research with human subjects, does the paper include the full text of instructions given to participants and screenshots, if applicable, as well as details about compensation (if any)? 
    \item[] Answer: \answerNA{} 
    \item[] Justification: Not applicable.
    \item[] Guidelines:
    \begin{itemize}
        \item The answer NA means that the paper does not involve crowdsourcing nor research with human subjects.
        \item Including this information in the supplemental material is fine, but if the main contribution of the paper involves human subjects, then as much detail as possible should be included in the main paper. 
        \item According to the NeurIPS Code of Ethics, workers involved in data collection, curation, or other labor should be paid at least the minimum wage in the country of the data collector. 
    \end{itemize}

\item {\bf Institutional review board (IRB) approvals or equivalent for research with human subjects}
    \item[] Question: Does the paper describe potential risks incurred by study participants, whether such risks were disclosed to the subjects, and whether Institutional Review Board (IRB) approvals (or an equivalent approval/review based on the requirements of your country or institution) were obtained?
    \item[] Answer: \answerNA{} 
    \item[] Justification: Not applicable.
    \item[] Guidelines:
    \begin{itemize}
        \item The answer NA means that the paper does not involve crowdsourcing nor research with human subjects.
        \item Depending on the country in which research is conducted, IRB approval (or equivalent) may be required for any human subjects research. If you obtained IRB approval, you should clearly state this in the paper. 
        \item We recognize that the procedures for this may vary significantly between institutions and locations, and we expect authors to adhere to the NeurIPS Code of Ethics and the guidelines for their institution. 
        \item For initial submissions, do not include any information that would break anonymity (if applicable), such as the institution conducting the review.
    \end{itemize}

\item {\bf Declaration of LLM usage}
    \item[] Question: Does the paper describe the usage of LLMs if it is an important, original, or non-standard component of the core methods in this research? Note that if the LLM is used only for writing, editing, or formatting purposes and does not impact the core methodology, scientific rigorousness, or originality of the research, declaration is not required.
    \item[] Answer: \answerNA{} 
    \item[] Justification: Not applicable.
    \item[] Guidelines:
    \begin{itemize}
        \item The answer NA means that the core method development in this research does not involve LLMs as any important, original, or non-standard components.
        \item Please refer to our LLM policy (\url{https://neurips.cc/Conferences/2025/LLM}) for what should or should not be described.
    \end{itemize}
\end{enumerate}

\end{document}